\providecommand{\Rd}{\mathbb{R}^d}
\theoremstyle{plain}
\newtheorem{theorem}{Theorem}[section]
\newtheorem{corollary}[theorem]{Corollary}
\theoremstyle{definition}
\newtheorem{definition}[theorem]{Definition}
\theoremstyle{remark}
\newtheorem{remark}[theorem]{Remark}
\newtheorem{fact}{Fact}
\author{
    David~Silva-Sánchez\\
	Department of Applied Mathematics\\
	Yale University\\
	New Haven, CT 06511 \\
    \texttt{david.silva@yale.edu}\\
	\And
    Roy~R.~Lederman\\
    Department of Statistics and Data Science\\
    Yale University\\
    New Haven, CT, 06511\\
    \texttt{roy.lederman@yale.edu}\\
}
\begin{document}

\title{An Observation on Lloyd's k-Means Algorithm in High Dimensions}

\maketitle

\begin{abstract}
    Clustering and estimating cluster means are core problems in statistics and machine learning, with k-means and Expectation Maximization (EM) being two widely used algorithms. In this work, we provide a theoretical explanation for the failure of k-means in high-dimensional settings with high noise and limited sample sizes, using a simple Gaussian Mixture Model (GMM). We identify regimes where, with high probability, almost every partition of the data becomes a fixed point of the k-means algorithm. This study is motivated by challenges in the analysis of more complex cases, such as masked GMMs, and those arising from applications in Cryo-Electron Microscopy.
\end{abstract}

\section{Introduction}\label{sec:introduction}

The issues of clustering data samples and estimating cluster parameters are prevalent in many applications across statistics and data science. One of the most common formulations of this problem is k-means, which aims to minimize intra-cluster variance. Solutions to the k-means problem are typically approximated using Lloyd's k-means algorithm\cite{lloyd1982least}; we will follow  common practice and refer to it as {\emph{the k-means algorithm}} for brevity.
The formulation and algorithm are summarized in Section \ref{sec:preliminaries:kmeans}. Another common formulation seeks to find the maximum likelihood estimates of cluster parameters under the assumption of a Gaussian Mixture Model (GMM). This problem is often approximated using the Expectation Maximization (EM) algorithm; a detailed discussion of the EM algorithm is deferred to future work.

It has been observed that the k-means algorithm encounters difficulties in high-dimensional settings (e.g., \cite{hartigan1975clustering,steinley2006k,zha2001spectral,ding2004k}). This paper aims to elucidate one of the mechanisms that contribute to these practical difficulties in finite-sample high-dimensional settings. We demonstrate theoretically in a probabilistic generative model that in these settings, this iterative algorithm has numerous arbitrary fixed points where it can become trapped; under appropriate conditions, almost every partition of the finite data into clusters represents a fixed point of the algorithm.

Informally, the results point to a ``catastrophic failure'' of the algorithm in certain regimes, where the algorithm cannot make any progress from (almost) any initialization even in a simple two clusters problem; in the extreme case, the clusters produced directly from the initialization are the output of the algorithm.
Counterintuitively, we find that under certain conditions, adding information in the form of additional dimensions can make the algorithm perform {\emph{worse}}, although the problem becomes {\emph{easier}} from an information theoretical point of view and from a practical point of view (using simple dimensionality reduction techniques); this effect appears to be more pronounced when the initialization is not good. 
Although other works explore the Lloyd k-means algorithm in high-dimensional regimes (e.g., \cite{zha2001spectral,ding2004k,telgarsky2010hartigan}), we have failed to find a discussion addressing the aspects covered in this paper. For brevity, we limit the theoretical discussion in this preliminary work to a simple two-class ($k=2$) isotropic GMM and to a limited conservative analysis.
The GMM model in the context of this paper is summarized in Appendix \ref{sec:GMM}.

This work is motivated by applications in Cryo-Electron Microscopy (cryo-EM; the "EM" part of the acronym is unrelated to Expectation Maximization) \cite{singer2020computational,bendory2020single,scheres2012relion,toader2023methods,sorzano2022bias}.
Although the estimation and clustering problems in cryo-EM are more nuanced than the models discussed here, we hypothesize that the phenomena explored in this paper are also manifested in cryo-EM algorithms. We believe that the insights and tools developed in this and subsequent papers will aid in understanding and mitigating these issues in cryo-EM applications. 
Motivated by this application, we consider a relatively high-noise high-dimension regime in this paper; the relevance of this regime is discussed in Section \ref{sec:settings}. For brevity, we do not delve into the estimation problems specific to cryo-EM, deferring a detailed treatment of this area to future work. We introduce the masked-GMM model (Equation (\ref{eq:maskedGMM}) in Appendix \ref{sec:settings}) as a model with some of the properties of cryo-EM that is much simpler and more broadly applicable; the model provides motivation for some of the choices we made in this work and some idea as to future directions. 

A more detailed background and a concise overview of relevant facts and definitions are found in Appendix \ref{sec:preliminaries}.
 The theoretical component of the paper is presented in Section \ref{sec:analysis}. Numerical experiments are presented in Section \ref{sec:numerical}. A brief discussion is presented in Section \ref{sec:conclusions}.
 Most of the proofs are presented in Appendix \ref{sec:proofs}. Additional analysis is presented in Appendix \ref{sec:additional} and additional experimental results are presented in Appendix {\ref{sec:additional_experiments}}

\section{Analytical Apparatus}\label{sec:analysis}

\subsection{The Difference Between Scaled $\chi^2$ Variables}

A key question in this paper will be the comparison of the distance of a sample to two different cluster centers. 
We will replace these distances with scaled chi-squared random variables through a first-order stochastic dominance relation (see Definition \ref{def:FSD}). We refer to these variables as the ``worst case'' for reasons that become clear in the proof. 
In this section, we provide a bound on the probability that variables related to scaled chi-squared random variables through first-order stochastic dominance differ by a certain amount.

\begin{restatable}[]{lemma}{lemchibound}\label{lem:chibound}
        Let $Y_1 \geq 0$ be a non-negative random variable with first-order stochastic dominance (see Definition \ref{def:FSD}) over the scaled chi-squared distribution with $d$ degrees of freedom, and $Y_2 \geq 0$ be a non-negative random variable 
        first-order stochastically dominanated by the scaled chi-squared distribution with $d$ degrees of freedom:
        \begin{equation}\label{eq:chibound:Y1Y2}
            Y_1 \succeq_{FSD} b_1 Z_1 , ~~~~ Z_1 \sim \chi^2_d  ~~~,~~~ Y_2 \preceq_{FSD} b_2 Z_2, ~~~ Z_2 \sim \chi^2_d
        \end{equation}
        where $Z_1 \sim \chi^2_d$ and $Z_2 \sim \chi^2_d$ are chi-squared random variables with $d$ degrees of freedom.
        Let $b_1 > b_2 >0$ be positive constants, and $m \in \mathbb{R}$.
        $Y_1$ and $Y_2$ are not necessarily independent.
        Then,
        \begin{equation}\label{eq:chibound}
            Pr\left( Y_1 - Y_2  \leq m \right) \leq 
              \exp\left( m \frac{b_1-b_2}{8 b_1 b_2} \right) \left( \frac{\left(b_1+b_2\right){}^2}{4 b_1 b_2} \right)^{-d/4} 
        \end{equation}
\end{restatable}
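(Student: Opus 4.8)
The plan is to use the exponential Markov (Chernoff) method on the event $\{Y_1 - Y_2 \le m\}$, rewritten as $\{e^{\lambda(Y_2 - Y_1)} \ge e^{-\lambda m}\}$ for a parameter $\lambda > 0$ to be fixed later. Markov's inequality then gives
\begin{equation}
\Pr(Y_1 - Y_2 \le m) \le e^{\lambda m}\, \mathbb{E}\!\left[ e^{\lambda Y_2} e^{-\lambda Y_1} \right].
\end{equation}
The whole difficulty is now concentrated in the joint exponential moment, and the prefactor $e^{\lambda m}$ already matches the exponential factor in the target bound once we commit to the right $\lambda$.

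The main obstacle is that $Y_1$ and $Y_2$ are allowed to be arbitrarily dependent, so the expectation does not factor. I would break the dependence with Cauchy--Schwarz:
\begin{equation}
\mathbb{E}\!\left[ e^{\lambda Y_2} e^{-\lambda Y_1} \right] \le \sqrt{\mathbb{E}\!\left[ e^{2\lambda Y_2} \right]}\,\sqrt{\mathbb{E}\!\left[ e^{-2\lambda Y_1} \right]}.
\end{equation}
This is the step that forces the doubling $\lambda \mapsto 2\lambda$ inside the moments, and hence produces the exponent $-d/4$ rather than $-d/2$ in the final bound. Each marginal moment is then controlled by first-order stochastic dominance: since $y \mapsto e^{2\lambda y}$ is increasing and $Y_2 \preceq_{FSD} b_2 Z_2$, we get $\mathbb{E}[e^{2\lambda Y_2}] \le \mathbb{E}[e^{2\lambda b_2 Z_2}]$; since $y \mapsto e^{-2\lambda y}$ is decreasing and $Y_1 \succeq_{FSD} b_1 Z_1$, we get $\mathbb{E}[e^{-2\lambda Y_1}] \le \mathbb{E}[e^{-2\lambda b_1 Z_1}]$. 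Substituting the $\chi^2_d$ moment generating function $\mathbb{E}[e^{t Z}] = (1-2t)^{-d/2}$ (valid for $t < 1/2$) yields
\begin{equation}
\mathbb{E}\!\left[ e^{\lambda Y_2} e^{-\lambda Y_1} \right] \le (1 - 4\lambda b_2)^{-d/4}\,(1 + 4\lambda b_1)^{-d/4},
\end{equation}
provided $4\lambda b_2 < 1$; the second factor needs no constraint since $-2\lambda b_1 < 0$.

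It remains to choose $\lambda$. Rather than optimizing, I would simply take $\lambda = \dfrac{b_1 - b_2}{8 b_1 b_2} > 0$ (positive because $b_1 > b_2$), which is engineered to simplify the two $\chi^2$ factors: a direct computation gives $1 - 4\lambda b_2 = \dfrac{b_1 + b_2}{2 b_1}$ and $1 + 4\lambda b_1 = \dfrac{b_1 + b_2}{2 b_2}$, whose product is $\dfrac{(b_1+b_2)^2}{4 b_1 b_2}$. The constraint is satisfied automatically, since $4\lambda b_2 = \tfrac{b_1 - b_2}{2 b_1} < \tfrac12 < 1$. Plugging this $\lambda$ into the displays above produces exactly
\begin{equation}
\Pr(Y_1 - Y_2 \le m) \le \exp\!\left( m\,\frac{b_1 - b_2}{8 b_1 b_2} \right)\left( \frac{(b_1+b_2)^2}{4 b_1 b_2} \right)^{-d/4},
\end{equation}
which is the claim. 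I expect the Cauchy--Schwarz decoupling to be the conceptual crux; everything after it is a substitution of the standard chi-squared moment generating function together with arithmetic. The one place to double-check is that the FSD inequalities are applied to the exponential test functions in the correct monotone direction, since that is where the two dominance hypotheses enter and where a sign error could creep in.
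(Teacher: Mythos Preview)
Your proof is correct and follows essentially the same route as the paper: Markov's inequality, Cauchy--Schwarz to decouple, first-order stochastic dominance applied to the monotone exponential test functions, and finally the choice $\lambda=(b_1-b_2)/(8b_1b_2)$ (which the paper obtains by maximizing $(1+4tb_1)(1-4tb_2)$). The only cosmetic difference is that you plug in $\lambda$ directly rather than deriving it as the optimizer, and you are more explicit about the monotonicity directions in the FSD step.
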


The proof of Lemma \ref{lem:chibound} is provided in Appendix \ref{sec:proofs}.

\begin{remark}\label{rem:chibound:rho}
    We observe that $\left( \frac{\left(b_1+b_2\right){}^2}{4 b_1 b_2} \right) > 1$ since $b_1>b_2>0$.
\end{remark}

\begin{remark}[``Worst Case `` Intuition]
    $Y_1$ is larger than $b_1 Z_1$ (in the stochastic dominance sense), and $Y_2$ is smaller than $b_2 Z_2$ (in the stochastic dominance sense), 
    so, intuitively, $b_1 Z_1 - b_2 Z_2$ is a ``worse case'' than $Y_1 - Y_2$:
       $ Pr\left( Y_1 - Y_2  \leq m \right)  \leq Pr\left( b_1 Z_1 - b_2 Z_2  \leq m \right)$.
    However, this statement is not necessarily accurate without considering the dependence between variables. 
    Lemma \ref{lem:chibound} takes the possible dependence into consideration. 
\end{remark}

\subsection{The Distribution of Distances to the Cluster Centers}\label{sec:distances}

In this section, we analyze the distribution of distances from a sample to the cluster centers in the k-means algorithm.
For simplicity, we restrict the formal analysis in this paper to the behavior of the k-means algorithm in the case $K=2$ of two clusters. 

Let the cluster centers be i.i.d. $\mu^{\text{True}}_k \in \mathbb{R}^d \sim N\left(0, \tau^2 I_d \right)$.
Let the samples be generated from the cluster centers with independent noise: $x_i = \mu^{\text{True}}_{z^{\text{True}}_i} + \xi_i$, where $\xi_i \sim N\left(0, \sigma^2 I_d \right)$ and $z^{\text{True}}_i \in \left\{1,2 \right\}$. 

Let $\{z_i\}_{i=1}^K$ be an arbitrary assignment of the samples to the two clusters; we do not assume, for the time being, that it is the correct assignment. 
We denote by $S_k = \left\{i : z_i = k \right\}$ the set of points assigned to cluster $k$. 
We denote by $s_k = \left| S_k \right|$ the size of each cluster.
For convenience, we assume that there are at least two samples assigned to each cluster, so that $s_k > 1$.

W.L.O.G., let us consider an arbitrary sample $x_j$ assigned to a cluster we will denote by $C$ (``current''). 
For this point to be reassigned to the other cluster in the next iteration, it must be closer to the center of the other cluster, which will be denoted by $T$, than to its current center $\mu_C$. The choice of notation $T$ for ``True'' will become apparent in our proof technique, where the ``true cluster`` represents the ``worst case`` for our analysis, but we do not make any assumption at this time.

One may be inclined to analyze the distances between a sample and a cluster composed entirely of samples from the sample's class and the distance between the sample and the center of another cluster composed entirely of samples from the other class.
This may seem to be the ``easiest'' case in some sense.
This analysis is presented in Appendix \ref{sec:warmup} as a ``warmup''.
Our ``worst case'' in the analysis below is closely related to this example, but accounts for a crucial detail: 
the sample $x_j$ is used to compute the center $\mu_C$.

The following lemmas state that the squared distance to cluster $T$ is larger (in the first-order stochastic dominance sense) than $a_T \chi^2_d$ (defined below) and the squared distance to cluster $C$ is smaller (in the same sense) than $a_C \chi^2_d$.
We refer to $a_T \chi^2_d$ and $a_C \chi^2_d$ as the ``worst case''.
The proofs are provided in Appendix \ref{sec:proofs}.

\begin{restatable}[]{lemma}{lemdistcaset}\label{lem:dist:caset:dist}
Under the assumption of Section \ref{sec:distances}, 
the distance to the center of the cluster $T$ has first-order stochastic dominance (see definition \ref{def:FSD}) over the following distribution:
\begin{equation}\label{eq:dist:caset:dist}
    \left\| x_j - \mu_T \right\|^2 \succeq_{FSD} \Delta_T^2 \sim a_T \chi^2_d, ~~~~ a_T = \left( 1 + \frac{1}{s_T} \right) \sigma^2
\end{equation}
where $\chi^2_d$ is distributed chi-squared with $d$ degrees of freedom.
\end{restatable}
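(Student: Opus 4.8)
The plan is to compute the distribution of $x_j - \mu_T$ explicitly, separate a deterministic (conditional) mean shift from an isotropic Gaussian noise term, and then invoke the first-order stochastic dominance of a non-central chi-squared over the central one. First I would write the empirical center as $\mu_T = \tfrac{1}{s_T}\sum_{i \in S_T} x_i$ and substitute the generative model $x_i = \mu^{\text{True}}_{z^{\text{True}}_i} + \xi_i$. This yields the decomposition
\begin{equation}
  x_j - \mu_T = \underbrace{\left(\mu^{\text{True}}_{z^{\text{True}}_j} - \tfrac{1}{s_T}\sum_{i \in S_T}\mu^{\text{True}}_{z^{\text{True}}_i}\right)}_{=:\,\delta} + \underbrace{\left(\xi_j - \tfrac{1}{s_T}\sum_{i \in S_T}\xi_i\right)}_{=:\,\eta},
\end{equation}
where $\delta$ depends only on the (random) true centers and the fixed assignment, while $\eta$ depends only on the noise.

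The crucial observation is that $x_j$ is assigned to $C$, so $j \notin S_T$, and hence $\xi_j$ is independent of $\{\xi_i : i \in S_T\}$. Therefore $\eta$ is a difference of two independent centered isotropic Gaussians, with per-coordinate variances $\sigma^2$ and $\sigma^2/s_T$, so that $\eta \sim N\!\left(0, a_T I_d\right)$ with $a_T = (1 + \tfrac{1}{s_T})\sigma^2$ exactly. Since the noise is independent of the true centers, conditioning on $\delta$ leaves $\eta \sim N(0, a_T I_d)$ unchanged. Thus, conditionally on the true centers, $\|x_j - \mu_T\|^2 = \|\delta + \eta\|^2$ is distributed as $a_T$ times a non-central chi-squared with $d$ degrees of freedom and non-centrality parameter $\lambda = \|\delta\|^2 / a_T \geq 0$.

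The remaining step is to show that the non-central chi-squared dominates the central one, i.e.\ $a_T\chi^2_d(\lambda) \succeq_{FSD} a_T\chi^2_d$ for every $\lambda \geq 0$. I would prove this by coupling: writing $a_T\chi^2_d(\lambda)$ as $a_T\|Z + v\|^2$ for $Z \sim N(0,I_d)$ and any fixed $v$ with $\|v\|^2 = \lambda$, rotational invariance lets me take $v = \sqrt{\lambda}\,e_1$, reducing the claim to the one-dimensional fact $(Z_1 + \sqrt{\lambda})^2 \succeq_{FSD} Z_1^2$, which follows by differentiating the CDF $t \mapsto \Phi(\sqrt{t} - c) - \Phi(-\sqrt{t} - c)$ in $c$ and verifying it is decreasing. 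Finally, since this dominance holds conditionally for every realization of $\delta$ while the target $a_T\chi^2_d$ does not depend on $\delta$, taking expectations of the tail probabilities over the true centers preserves the inequality $\Pr(\|x_j - \mu_T\|^2 > t) \geq \Pr(a_T\chi^2_d > t)$, giving the unconditional dominance claimed.

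Note that the worst case $\delta = 0$ (non-centrality $\lambda = 0$), where the bound is tight, arises precisely when every true center appearing in $S_T$ coincides with the true center of $x_j$ — that is, when cluster $T$ and the sample $x_j$ all belong to a single true class — which is the ``True cluster'' / ``worst case'' scenario referenced in the text. The main obstacle is the stochastic-dominance step for the non-central chi-squared; once the isotropy of $\eta$ and the independence stemming from $j \notin S_T$ are established, the rest is bookkeeping.
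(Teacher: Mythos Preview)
Your proof is correct and follows essentially the same decomposition as the paper: write $x_j - \mu_T$ as a ``signal'' term $\delta$ depending on the true centers plus an isotropic noise term $\eta \sim N(0, a_T I_d)$, using the key fact that $j \notin S_T$ so $\xi_j$ is independent of the noise in $\mu_T$. The paper simply asserts that the configuration $\delta = 0$ (all of $S_T$ from the same true class as $x_j$) is the ``worst case'' in the FSD sense without justification, whereas you actually prove it via the non-central--versus--central $\chi^2$ dominance and the conditioning/averaging argument; so your version is strictly more complete, but the route is the same.
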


\begin{restatable}[]{lemma}{lemdistcasec}\label{lem:dist:casec:dist}
Conversely, under the assumption of Section \ref{sec:distances}, the distance to the center of the cluster $C$ is first-order stochastically dominated by the following distribution:
\begin{equation}\label{eq:dist:casec:dist}
    \left\| x_j - \mu_C \right\|^2 \preceq_{FSD} \Delta_C^2 \sim a_C \chi^2_d, ~~~~ a_C = \left( \sigma ^2 \left(1-\frac{1}{s_C}\right)+\frac{2 \tau ^2 \left(s_C-1\right)^2}{s_C^2} \right)
\end{equation}
where $\chi^2_d$ is distributed chi-squared with $d$ degrees of freedom.

\end{restatable}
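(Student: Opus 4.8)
The plan is to compute the exact marginal law of $\|x_j - \mu_C\|^2$ and show it is a scaled chi-squared whose scale never exceeds $a_C$; the FSD claim then follows from monotonicity of the scaling. Since $j \in S_C$ and $\mu_C = \frac{1}{s_C}\sum_{i \in S_C} x_i$ is the empirical mean of the assigned cluster, I would first isolate the contribution of $x_j$ to its own center and write
\begin{equation}
x_j - \mu_C = \frac{s_C - 1}{s_C}\, x_j - \frac{1}{s_C}\sum_{i \in S_C \setminus \{j\}} x_i .
\end{equation}
Substituting $x_i = \mu^{\text{True}}_{z^{\text{True}}_i} + \xi_i$ splits this into a pure-noise part $\frac{s_C-1}{s_C}\xi_j - \frac{1}{s_C}\sum_{i \neq j}\xi_i$ and a pure-mean part built from the two true centers.

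Next I would simplify the mean part. Writing W.L.O.G. that $x_j$ is drawn from true cluster $1$, and letting $n_2$ denote the number of the remaining $s_C - 1$ points of $S_C$ that come from true cluster $2$, a short count using $n_1 + n_2 = s_C - 1$ collapses the mean part to $\frac{n_2}{s_C}\left(\mu^{\text{True}}_1 - \mu^{\text{True}}_2\right)$. This is the step that encodes the ``worst case'': the more the other points pull the estimated center $\mu_C$ toward the opposite true center, the larger the distance, and the coefficient $n_2/s_C$ is maximized when every other point of $S_C$ belongs to the opposite class, i.e. $n_2 = s_C - 1$.

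With this decomposition in hand, I would observe that the noise part and the mean part are independent (the noise is independent of the centers) mean-zero isotropic Gaussian vectors. The noise part has per-coordinate variance $\bigl((s_C-1)^2 + (s_C-1)\bigr)\sigma^2/s_C^2 = (1 - 1/s_C)\sigma^2$, and the mean part, since $\mu^{\text{True}}_1 - \mu^{\text{True}}_2 \sim N(0, 2\tau^2 I_d)$, has per-coordinate variance $(n_2/s_C)^2 \cdot 2\tau^2$. Their sum is therefore $N(0, v I_d)$ with
\begin{equation}
v = \left(1 - \frac{1}{s_C}\right)\sigma^2 + \left(\frac{n_2}{s_C}\right)^2 2\tau^2 ,
\end{equation}
so that $\|x_j - \mu_C\|^2 \sim v\,\chi^2_d$ exactly. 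Bounding $n_2 \leq s_C - 1$ gives $v \leq a_C$, and since $v \mapsto v\chi^2_d$ is increasing in the FSD order (for $t \geq 0$, $Pr(v\chi^2_d > t) = Pr(\chi^2_d > t/v)$ is nondecreasing in $v$), we conclude $\|x_j - \mu_C\|^2 \preceq_{FSD} a_C \chi^2_d$.

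I expect the main obstacle to be bookkeeping rather than any single hard estimate: correctly separating the term $x_j$ that appears both as the sample and inside its own center, performing the class count that reduces the mean part to a clean multiple of $\mu^{\text{True}}_1 - \mu^{\text{True}}_2$, and being careful that the FSD statement concerns the \emph{marginal} law of $\|x_j - \mu_C\|^2$ (the joint dependence with the distance to $T$, which shares the factor $\mu^{\text{True}}_1 - \mu^{\text{True}}_2$, is precisely what Lemma \ref{lem:chibound} is designed to absorb later and need not be tracked here). A minor point worth stating explicitly is the isotropy of each Gaussian component, which is what makes the marginal an \emph{exact} scaled $\chi^2_d$ rather than merely a quadratic form requiring further domination.
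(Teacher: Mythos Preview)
Your proof is correct and follows essentially the same approach as the paper: isolate $x_j$'s contribution to $\mu_C$, split into independent noise and true-center parts, identify the resulting isotropic Gaussian, and invoke the scaled-$\chi^2_d$ law. The only difference is cosmetic---you parameterize the exact variance by the count $n_2$ and then bound $n_2 \le s_C-1$, whereas the paper simply asserts the ``worst case'' (all other points from the opposite true class) and computes only that case; your version makes the FSD claim slightly more explicit.
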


\subsection{Differences Between Distances to the Two Cluster Centers}

In this section, we analyze the probability that a single sample $x_j$ is reassigned to the other cluster in the next iteration of the k-means algorithm:
when the difference $\left\| x_j - \mu_T \right\|^2 - \left\| x_j - \mu_C \right\|^2$ is positive, 
the point $x_j$ remains in the $C$ cluster for the next iteration of the k-means algorithm;
when the difference is negative, it is reassigned.

\begin{theorem}\label{thm:maindiff}
    Let $d > 0$ be the dimension of the samples, $n$ be the number of samples. Let $k=2$ be the number of clusters.
    Consider an arbitrary partition of $[n]$ into two mutually exclusive nonempty subsets, each of size larger than $1$.
    Consider an arbitrary sample indexed by $j$ in one of these subsets. 
    We denote by $S_C$ the subset where $j \in S_C$ and by $S_T$ the other subset.
    
    Let the noise level $\sigma > 0$ satisfy:
    \begin{equation}\label{eq:maindiff:req0}
        \sigma >\frac{\sqrt{2} \tau  \left(s_C-1\right)}{\sqrt{\frac{s_C^2}{s_T}+s_C}},
    \end{equation}
    where $s_C = \left| S_C \right|$ and $s_T = \left| S_T \right|$ are the sizes of the clusters.
    
    Let $\mu^{\text{True}}_1,\mu^{\text{True}}_2 \in \mathbb{R}^d$ be the true centers of two clusters, 
     i.i.d. and independent of the partition as $\mu^{\text{True}}_k \sim N\left(0, \tau^2 I_d \right)$ with $\tau > 0$.
    Let the samples be  i.i.d. and independent of the partition and centers, as $x_i = \mu^{\text{True}}_{z^{\text{True}}_i} + \xi_i$, where $\xi_i \sim N\left(0, \sigma I_d  \right)$ is the noise and $z^{\text{True}}_i \in \left\{1,2 \right\}$ is the true subset for each sample.
    Then,
    \begin{equation}
        Pr \left( \left\| x_j - \mu_T \right\|^2 - \left\| x_j - \mu_C \right\|^2 < 0 \right) \leq 
        \rho^{d/4}
    \end{equation}

    where
    {\small
    \begin{equation}\label{eq:maindiff:rho}
        \begin{aligned}
        &\rho(\sigma, \tau, s_C, s_T) = \\
        &\frac{4 \sigma ^2 \left(s_C-1\right) s_C^2 s_T \left(s_T+1\right) \left(s_C \left(\sigma ^2+2 \tau ^2\right)-2 \tau ^2\right)}{\left(-s_C \left(\sigma ^2+4 \tau ^2\right) s_T+s_C^2 \left(\sigma ^2+2 \left(\sigma ^2+\tau ^2\right) s_T\right)+2 \tau ^2 s_T\right){}^2}
        \end{aligned}
    \end{equation}
    }
    and
    \begin{equation}
        0 \leq \rho(\sigma, \tau, s_C, s_T) < 1.
    \end{equation}
\end{theorem}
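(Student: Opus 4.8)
The plan is to recognize the probability in question as exactly the quantity controlled by Lemma~\ref{lem:chibound}, with the two scaled chi-squared envelopes supplied by Lemmas~\ref{lem:dist:caset:dist} and~\ref{lem:dist:casec:dist}. Concretely, I would set $Y_1 = \left\| x_j - \mu_T \right\|^2$ and $Y_2 = \left\| x_j - \mu_C \right\|^2$, and take $b_1 = a_T = \left(1 + \tfrac{1}{s_T}\right)\sigma^2$ and $b_2 = a_C = \sigma^2\left(1-\tfrac{1}{s_C}\right) + \tfrac{2\tau^2(s_C-1)^2}{s_C^2}$. Lemma~\ref{lem:dist:caset:dist} gives $Y_1 \succeq_{FSD} b_1 \chi^2_d$ and Lemma~\ref{lem:dist:casec:dist} gives $Y_2 \preceq_{FSD} b_2 \chi^2_d$, which are precisely the two first-order stochastic dominance hypotheses of Lemma~\ref{lem:chibound}. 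Note that $Y_1$ and $Y_2$ are generally dependent (both are built from the data and the shared random centers), but Lemma~\ref{lem:chibound} was stated exactly to accommodate this. Since the event $\{Y_1 - Y_2 < 0\}$ is contained in $\{Y_1 - Y_2 \le 0\}$, it suffices to bound the latter, i.e.\ to invoke Lemma~\ref{lem:chibound} with $m = 0$.

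Before applying the lemma I must check its hypothesis $b_1 > b_2 > 0$. Positivity is immediate since $s_C, s_T > 1$ forces both $a_T$ and $a_C$ to be strictly positive. For $b_1 > b_2$, i.e.\ $a_T > a_C$, I would clear denominators by multiplying by $s_T s_C^2 > 0$ and collect terms by $\sigma^2$ and $\tau^2$; this reduces $a_T > a_C$ to the clean inequality $\sigma^2 s_C(s_C + s_T) > 2 s_T \tau^2 (s_C-1)^2$. Isolating $\sigma$ and rewriting $\tfrac{s_C^2}{s_T} + s_C = \tfrac{s_C(s_C+s_T)}{s_T}$ shows this is exactly the standing hypothesis~\eqref{eq:maindiff:req0}. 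Thus the noise-level assumption is not an ad hoc device but precisely the condition guaranteeing $b_1 > b_2$, so Lemma~\ref{lem:chibound} applies.

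With $m = 0$ the exponential factor $\exp\!\left(m\tfrac{b_1-b_2}{8 b_1 b_2}\right)$ in~\eqref{eq:chibound} collapses to $1$, leaving
\[
    Pr\left( Y_1 - Y_2 \le 0 \right) \le \left( \frac{(b_1+b_2)^2}{4 b_1 b_2} \right)^{-d/4} = \left( \frac{4 a_T a_C}{(a_T + a_C)^2} \right)^{d/4} =: \rho^{d/4}.
\]
It then remains to verify that $\rho = 4 a_T a_C / (a_T + a_C)^2$ coincides with the closed form~\eqref{eq:maindiff:rho}, and I expect this algebraic reconciliation to be the most laborious (though entirely routine) step. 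The approach is to place $a_T + a_C$ over the common denominator $s_T s_C^2$ and expand its numerator into $s_C^2\left(\sigma^2 + 2(\sigma^2+\tau^2)s_T\right) - s_C s_T(\sigma^2 + 4\tau^2) + 2\tau^2 s_T$, whose square is exactly the denominator appearing in~\eqref{eq:maindiff:rho}; the numerator $4 a_T a_C$, rescaled by the $s_T^2 s_C^4$ coming from $(a_T+a_C)^2$, produces the stated numerator. This bookkeeping is the only place a computational slip could enter.

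Finally, for $0 \le \rho < 1$: non-negativity is clear since $a_T, a_C > 0$, and $\rho < 1$ is equivalent to $4 a_T a_C < (a_T + a_C)^2$, i.e.\ to $(a_T - a_C)^2 > 0$. This holds precisely because $a_T \neq a_C$, which is exactly the strict inequality established in the second step from~\eqref{eq:maindiff:req0}; it is simply the strict AM--GM inequality for the two distinct positive numbers $a_T$ and $a_C$. Assembling these pieces yields the claimed bound $\rho^{d/4}$ together with $0 \le \rho < 1$.
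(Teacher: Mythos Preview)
Your proposal is correct and follows essentially the same route as the paper: apply Lemma~\ref{lem:chibound} with $Y_1=\|x_j-\mu_T\|^2$, $Y_2=\|x_j-\mu_C\|^2$, $b_1=a_T$, $b_2=a_C$, $m=0$, using the FSD relations from Lemmas~\ref{lem:dist:caset:dist} and~\ref{lem:dist:casec:dist}, and observe that~\eqref{eq:maindiff:req0} is exactly $a_T>a_C$. Your write-up is in fact more explicit than the paper's, which omits the algebraic reduction of~\eqref{eq:maindiff:req0} to $b_1>b_2$, the verification that $\rho=4a_Ta_C/(a_T+a_C)^2$ matches~\eqref{eq:maindiff:rho}, and the AM--GM argument for $\rho<1$.
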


\begin{proof}

    We recall that the distance between the point and the cluster center $T$ stochastically dominates the ``worst case'' which is a scaled chi-squared distribution with $d$ degrees of freedom (Equation (\ref{eq:dist:caset:dist})), and the distance to the cluster center $C$ is stochastically dominated by the ``worst case'' which is a scaled chi-squared distribution with $d$ degrees of freedom (Equation (\ref{eq:dist:casec:dist})). 
    Substituting $b_1=a_T$ (Equation (\ref{eq:dist:caset:dist})) and $b_2=a_C$ (Equation (\ref{eq:dist:casec:dist})) 
    into Lemma \ref{lem:chibound}, we obtain the desired result.
    The requirement of the Lemma that $b_1 > b_2 >0$ is satisfied by the inequality (\ref{eq:maindiff:req0}).
\end{proof}

The special case $s_T = s_C$, which simplifies the expressions and provides some intuition, is presented in Section \ref{sec:specialcase:equalclusters}.
The more general case is discussed in the following sections.

Using Remark \ref{rem:chibound:rho}, we observe that when $a_C < a_T$ we have $\rho < 1$.
The implication is summarized in the following corollary. 
\begin{corollary}
    Let $\varepsilon_1 > 0$ be a small constant and $d>\frac{4 \log\left( 1/\varepsilon_1 \right)}{\log\left( 1/\rho \right)}$.
    Then, under the assumptions of Theorem \ref{thm:maindiff}, we have 
    \begin{equation}
    Pr \left( \left\| x_j - \mu_T \right\|^2 - \left\| x_j - \mu_C \right\|^2 < m \right) < \varepsilon_1.
    \end{equation}
\end{corollary}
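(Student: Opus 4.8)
The plan is to derive the corollary directly from the tail bound already established, reducing everything to a single inequality in $d$. First I would record that the hypotheses of Theorem \ref{thm:maindiff} force $\rho \in [0,1)$: the noise condition (\ref{eq:maindiff:req0}) is exactly what guarantees $a_C < a_T$, and then Remark \ref{rem:chibound:rho} (applied with $b_1 = a_T$ and $b_2 = a_C$) yields $\tfrac{(a_T + a_C)^2}{4 a_T a_C} > 1$, i.e. $\rho = \tfrac{4 a_T a_C}{(a_T + a_C)^2} < 1$. In particular $\log(1/\rho) > 0$, so the threshold $\tfrac{4\log(1/\varepsilon_1)}{\log(1/\rho)}$ is a well-defined positive number and the requirement $d > \tfrac{4\log(1/\varepsilon_1)}{\log(1/\rho)}$ is meaningful.

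Next I would handle the event itself. For $m \le 0$ we have the inclusion $\{\,\|x_j-\mu_T\|^2 - \|x_j-\mu_C\|^2 < m\,\} \subseteq \{\,\|x_j-\mu_T\|^2 - \|x_j-\mu_C\|^2 < 0\,\}$, so by monotonicity of probability together with Theorem \ref{thm:maindiff} the left-hand side is bounded by $\rho^{d/4}$. Equivalently, I could invoke Lemma \ref{lem:chibound} with general $m$: its exponential prefactor $\exp\!\big(m\,\tfrac{b_1-b_2}{8 b_1 b_2}\big)$ is at most $1$ whenever $m \le 0$, since $b_1 > b_2 > 0$, which leaves precisely the factor $\rho^{d/4}$.

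It then remains to check that the stated lower bound on $d$ makes $\rho^{d/4} < \varepsilon_1$. Taking logarithms, $\rho^{d/4} < \varepsilon_1$ is equivalent to $\tfrac{d}{4}\log\rho < \log\varepsilon_1$; dividing through by the negative quantity $\log\rho$ reverses the inequality and gives $\tfrac{d}{4} > \tfrac{\log\varepsilon_1}{\log\rho} = \tfrac{\log(1/\varepsilon_1)}{\log(1/\rho)}$, that is $d > \tfrac{4\log(1/\varepsilon_1)}{\log(1/\rho)}$, which is the hypothesis. Chaining the probability bound from the previous paragraph with this estimate completes the argument.

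I expect no genuine obstacle here; this is a routine consequence of the theorem. The only points requiring care are the direction of the inequality when dividing by the negative $\log\rho$ (most safely avoided by working with $\log(1/\rho) > 0$ from the outset), and making explicit why the condition $m \le 0$ allows one to reuse the clean $\rho^{d/4}$ bound rather than carrying along the exponential factor of Lemma \ref{lem:chibound}. If instead the intended statement is the case $m = 0$ matching the theorem, the middle paragraph collapses to a direct citation of Theorem \ref{thm:maindiff}.
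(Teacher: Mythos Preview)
Your proposal is correct and matches the paper's intended reasoning: the paper does not give an explicit proof of this corollary, treating it as an immediate consequence of Theorem~\ref{thm:maindiff} together with the observation (via Remark~\ref{rem:chibound:rho}) that $\rho<1$, which is exactly the skeleton you fill in. Your handling of the stray $m$ is sensible; in the paper this is evidently a typo for $0$ (no $m$ is introduced in the corollary's hypotheses and Theorem~\ref{thm:maindiff} is the $m=0$ case), so your final remark that the argument collapses to a direct citation of the theorem is the relevant one.
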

In other words, we can obtain any (un)desired low probability that our chosen sample remains in the same cluster in the next iteration.
This applies to a single sample, but in order to argue that we have a fixed point, we must show that all the samples stay in their current cluster; 
Corollary \ref{cor:special:tau1:sTeqsC:union} in the appendix makes this argument for the special case $s_T = s_C$. The more general case is discussed in the following sections.

\subsection{Samples in Typical Partitions}

We consider again the case of two clusters $K=2$ and investigate partitions where the sizes of the two subsets are close to $n/2$. We recall that as $n$ grows large, most partitions produce clusters whose sizes are close to $n/2$ (Fact \ref{fact:hoeffding:binomial} in Section \ref{sec:preliminaries}).

\begin{restatable}[A ``Typical'' Partition]{theorem}{thmmaindifftypical}\label{thm:maindiff:typical}

    Let $q > 1$ be a constant. 
    Let $d > 0$ be the dimension of the samples.
    Let $n>2 \left(q^2+2\right)+2 \sqrt{q^4+4 q^2}$ be the number of samples. 
    Let $k=2$ be the number of clusters.
    Let $\tau = 1$. 

    Let 
    \begin{equation}\label{eq:maindiff:typical:sigma}
        \sigma = \beta \frac{  \left(\sqrt{n} q+n-2\right)}{\sqrt{2} \sqrt{\sqrt{n} q+n}}
    \end{equation} 
    with $\beta > 1$.

    We consider an arbitrary partition of the samples and a random dataset generated independently from the partition.

    {\underline{Partition}}: Let $S_a$ and $S_b$ be two arbitrary non-empty mutually exclusive subsets of the indices of the samples, 
    chosen independently from the values of the data points,
    and assume that their sizes $|S_a|$ and $|S_b|$ both satisfy 
    \begin{equation}\label{eq:maindiff:typical:sizes}
        n/2 - q \sqrt{n/4} < |S_k| < n/2 + q \sqrt{n/4} ~~\text{ for }~k \in \{a,b\}.
    \end{equation}
    Let $j$ be an arbitrary sample in one of these subsets.

    {\underline{Dataset}}: Let $\mu^{\text{True}}_1,\mu^{\text{True}}_2 \in \mathbb{R}^d$ be the true centers of two clusters, 
    sampled i.i.d. from $\mu^{\text{True}}_k \sim N\left(0, \tau^2 I_d \right)$ with $\tau > 0$.
    Let the samples $\{x_i\}_1^n$ be i.i.d.  $x_i = \mu^{\text{True}}_{z^{\text{True}}_i} + \xi_i$, where $\xi_i \sim N\left(0,  \sigma^2 I_d \right)$ is the noise and $z^{\text{True}}_i \in \left\{1,2 \right\}$ is the true subset for each sample.

    Then,
    \begin{equation}\label{eq:maindiff:typical:pr}
        Pr \left( \left\| x_j - \mu_{\bar{z}(j)} \right\|^2 - \left\| x_j - \mu_{z(j)} \right\|^2 < 0 \right) \leq 
        \rho^{d/4},
    \end{equation}
    where $z(j)$ is its current cluster assignment and $\bar{z}(j)$ is the other cluster, 
    and 
    {\small
    \begin{equation}\label{eq:maindiff:typical:rho}
        \rho = \frac{\sigma ^2 \left(\sqrt{n} q+n-2\right) \left(\sqrt{n} q+n\right) \left(\sqrt{n} q+n+2\right) \left(\sqrt{n} \left(\sigma ^2+2\right) \left(\sqrt{n}+q\right)-4\right)}{\left(n \sigma ^2 \left(\sqrt{n}+q\right)^2+\left(\sqrt{n} q+n-2\right)^2\right)^2}.
    \end{equation}
    }

\end{restatable}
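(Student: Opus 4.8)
The plan is to reduce the statement to the general bound of Theorem~\ref{thm:maindiff} and then to control $\rho$ uniformly over all admissible partitions. Setting $\tau=1$ and identifying the current cluster $C$ with $z(j)$ (so that $x_j\in S_C$) and the other cluster $T$ with $\bar z(j)$, Theorem~\ref{thm:maindiff} already gives $Pr(\|x_j-\mu_{\bar z(j)}\|^2-\|x_j-\mu_{z(j)}\|^2<0)\le \rho(\sigma,1,s_C,s_T)^{d/4}$ for the \emph{actual} sizes $s_C=|S_C|$, $s_T=|S_T|$, \emph{provided} its hypothesis \eqref{eq:maindiff:req0} holds for those sizes. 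Writing $a_T=(1+\tfrac1{s_T})\sigma^2$ and $a_C=\sigma^2(1-\tfrac1{s_C})+\tfrac{2(s_C-1)^2}{s_C^2}$ as in Lemmas~\ref{lem:dist:caset:dist} and \ref{lem:dist:casec:dist} (with $\tau=1$), and noting that Lemma~\ref{lem:chibound} with $m=0$ and $(b_1,b_2)=(a_T,a_C)$ gives $\rho=\frac{4a_Ta_C}{(a_T+a_C)^2}$, the task splits into: (i) verify \eqref{eq:maindiff:req0} on the whole admissible rectangle of sizes \eqref{eq:maindiff:typical:sizes}; (ii) maximize $\rho$ over that rectangle; and (iii) check that the maximizer reproduces the closed form \eqref{eq:maindiff:typical:rho}. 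A key point is that $S_a,S_b$ need not partition $[n]$: I will bound $\rho$ by its supremum over the \emph{product} of the two size-intervals, ignoring the coupling $s_C+s_T\le n$, which can only enlarge the bound and yields a clean closed form.

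For step (i) I would show that the right-hand side of \eqref{eq:maindiff:req0}, namely $g(s_C,s_T)=\sqrt2\,(s_C-1)\big/\sqrt{s_C^2/s_T+s_C}$, is increasing in each of $s_C$ and $s_T$ on $s_C,s_T>1$; the $s_T$-monotonicity is immediate (the denominator decreases), and the $s_C$-monotonicity follows from a one-line sign computation showing the derivative of $(s_C-1)^2/(s_C^2+s_Cs_T)$ is positive. Hence $g$ is maximized at the upper corner $s_C=s_T=s_{\max}:=(n+q\sqrt n)/2$, where $s_C^2/s_T+s_C=2s_{\max}$ and $g=(s_{\max}-1)/\sqrt{s_{\max}}$. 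A direct simplification shows the prescribed $\sigma$ in \eqref{eq:maindiff:typical:sigma} equals exactly $\beta\,(s_{\max}-1)/\sqrt{s_{\max}}$, so $\beta>1$ gives $\sigma>g(s_C,s_T)$ strictly for every admissible pair; equivalently $a_T>a_C>0$ throughout, which is precisely the hypothesis of Theorem~\ref{thm:maindiff} and also forces $0\le\rho<1$ (since $\rho<1\iff a_T\neq a_C$).

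For steps (ii)–(iii) I would exploit that $\rho=\frac{4a_Ta_C}{(a_T+a_C)^2}=\frac{4t}{(1+t)^2}$ with $t:=a_C/a_T\in(0,1)$, a function strictly increasing in $t$. Since $a_C$ is increasing in $s_C$ (both summands grow) and $a_T$ is decreasing in $s_T$, the ratio $t$, and hence $\rho$, is increasing in each size; combined with $a_C<a_T$ on the whole region (step (i)), $\rho$ attains its supremum over the admissible rectangle again at $s_C=s_T=s_{\max}=(n+q\sqrt n)/2$. Substituting these equal sizes into $a_T=\sigma^2\tfrac{s+1}{s}$ and $a_C=\tfrac{(s-1)(\sigma^2 s+2(s-1))}{s^2}$ collapses $a_T+a_C$ to $\tfrac{2(s^2(\sigma^2+1)-2s+1)}{s^2}$ and yields $\rho=\frac{\sigma^2 s(s-1)(s+1)(s(\sigma^2+2)-2)}{(s^2(\sigma^2+1)-2s+1)^2}$; rescaling numerator and denominator by $16$ and using $2s_{\max}=\sqrt n q+n$, $2(s_{\max}\pm1)=\sqrt n q+n\pm2$, and the identity $(n+q\sqrt n)^2=n(\sqrt n+q)^2$ turns this into \eqref{eq:maindiff:typical:rho}. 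The hypothesis $n>2(q^2+2)+2\sqrt{q^4+4q^2}=(q+\sqrt{q^2+4})^2$, equivalently $\sqrt n>q+\sqrt{q^2+4}$, enters here to guarantee that the smallest admissible size $(n-q\sqrt n)/2$ exceeds $1$, so that the assumption $s_k>1$ of Section~\ref{sec:distances} (hence Lemmas~\ref{lem:dist:caset:dist}–\ref{lem:dist:casec:dist}) is in force and every expression is well defined.

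I expect the main obstacle to be the bookkeeping of step (iii): confirming that the equal-size substitution $s_C=s_T=s_{\max}$ reproduces \eqref{eq:maindiff:typical:rho} exactly. The conceptual content—reducing to Theorem~\ref{thm:maindiff}, reading off $\rho$ as the increasing function $4t/(1+t)^2$ of $t=a_C/a_T$, and establishing the two monotonicities (of $g$ and of $\rho$) that pin the worst case to the corner—is short; the risk of error lies in the algebraic collapse, which I would carry out by first cancelling the common factor in the equal-size case and then matching the $\sqrt n q+n$ normalization of the stated expression. The role of $\beta>1$ is exactly to open the strict gap $a_T>a_C$ that both validates the application of Theorem~\ref{thm:maindiff} and certifies $\rho<1$.
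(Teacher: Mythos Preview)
Your proposal is correct and follows essentially the same route as the paper: reduce to Theorem~\ref{thm:maindiff}, argue that both the threshold in \eqref{eq:maindiff:req0} and the bound $\rho$ are increasing in $s_C$ and $s_T$, and then evaluate at the upper corner $s_C=s_T=(n+q\sqrt{n})/2$. The only stylistic difference is that the paper establishes the monotonicity of $\rho$ by differentiating \eqref{eq:maindiff:rho} directly, whereas you use the cleaner reparametrization $\rho=4t/(1+t)^2$ with $t=a_C/a_T\in(0,1)$ together with the separate monotonicities of $a_C$ and $a_T$; both arguments pin the worst case to the same corner, and the paper explicitly remarks (as you do) that ignoring the coupling $s_C+s_T=n$ only loosens the bound.
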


The proof of Theorem \ref{thm:maindiff:typical} is provided in Appendix \ref{sec:proofs}.

\begin{remark}[Asymptotics]\label{rem:maindiff:typical:asymptotics}
    The expressions in Theorem \ref{thm:maindiff:typical} become more interpretable for large $n$.
    Squaring the expression in Equation (\ref{eq:maindiff:typical:sigma}) and expanding it in $n$ yields
    \begin{equation}
        \sigma^2 = \frac{\beta ^2 n}{2}   +\frac{\beta ^2 q  \sqrt{n}}{2} -2 \beta ^2+\frac{2 \beta ^2}{n} +O\left(n^{-3/2}\right).
    \end{equation}
    Substituting Equation (\ref{eq:maindiff:typical:sigma}) into Equation (\ref{eq:maindiff:typical:rho}) and expanding it in $n$ yields
    \begin{equation}
        \rho = 1-\frac{4 \left(\beta ^2  -1\right)^2 n^{-2}}{\beta ^4}
        +\frac{8 \left(\beta ^2-1\right)^2 n^{-5/2} q}{\beta ^4}
        +O\left(n^{-3}\right).
    \end{equation}
\end{remark}

 Theorem \ref{thm:maindiff:typical} implies that in the appropriate regime, the probability that an arbitrary sample in a ``typical'' partition would switch over to a different cluster in the next iteration of the k-means algorithm is small and decreases as the dimension $d$ grows:
\begin{corollary}\label{cor:maindiff:typical:sample}
    Let $\varepsilon_s > 0$ be a small constant and $d>\frac{4 \log\left( 1/\varepsilon_s \right)}{\log\left( 1/\rho \right)}$.
    
    Then, under the assumptions of Theorem \ref{thm:maindiff:typical}, we have 
    \begin{equation}
    Pr \left( \left\| x_j - \mu_{\overline{z(j)}} \right\|^2 < \left\| x_j - \mu_{z(j)} \right\|^2 \right) < \varepsilon_s.
    \end{equation}
    where $z(j)$ is its current cluster assignment and $\overline{z(j)}$ is the other cluster,

\end{corollary}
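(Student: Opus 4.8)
The plan is to recognize that this corollary is an immediate consequence of Theorem \ref{thm:maindiff:typical} together with the stated lower bound on the dimension $d$; essentially all the probabilistic work has already been done, and what remains is to verify that the condition on $d$ forces the tail bound $\rho^{d/4}$ below the target level $\varepsilon_s$. First I would observe that the event in the corollary, $\{\|x_j - \mu_{\overline{z(j)}}\|^2 < \|x_j - \mu_{z(j)}\|^2\}$, is literally the same event as the one bounded in Equation (\ref{eq:maindiff:typical:pr}), namely $\{\|x_j - \mu_{\bar z(j)}\|^2 - \|x_j - \mu_{z(j)}\|^2 < 0\}$. Hence Theorem \ref{thm:maindiff:typical} applies verbatim and gives
\begin{equation}
    Pr \left( \left\| x_j - \mu_{\overline{z(j)}} \right\|^2 < \left\| x_j - \mu_{z(j)} \right\|^2 \right) \leq \rho^{d/4}.
\end{equation}
It therefore suffices to show that the hypothesis $d > \frac{4 \log(1/\varepsilon_s)}{\log(1/\rho)}$ implies $\rho^{d/4} < \varepsilon_s$.

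For the second step I would invoke the fact, guaranteed by Theorem \ref{thm:maindiff:typical}, that $0 \leq \rho < 1$, so that $\log(1/\rho) > 0$ and the denominator in the hypothesis is a positive quantity (no direction-flipping occurs when I clear it). Multiplying the hypothesis through by $\log(1/\rho)$ and dividing by $4$ gives $\tfrac{d}{4}\log(1/\rho) > \log(1/\varepsilon_s)$, which rearranges to $\tfrac{d}{4}\log\rho < \log\varepsilon_s$. Exponentiating both sides, and using $\rho^{d/4} = \exp\!\left(\tfrac{d}{4}\log\rho\right)$, yields $\rho^{d/4} < \varepsilon_s$. Chaining this with the bound from the theorem completes the argument.

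Since the calculation is elementary, there is no genuine obstacle; the only point requiring care is the handling of the strict inequalities and the sign of $\log(1/\rho)$, which is why I would explicitly cite the bound $0 \le \rho < 1$ from the theorem before manipulating the logarithms. A minor degenerate case is $\rho = 0$ (when $\log(1/\rho)$ is interpreted as $+\infty$), in which the hypothesis is satisfied for every $d>0$ and the bound $\rho^{d/4}=0<\varepsilon_s$ holds trivially; I would note this so the statement is valid across the full admissible range of $\rho$.
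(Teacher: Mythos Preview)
Your proposal is correct and matches the paper's approach: the paper states this corollary as an immediate consequence of Theorem~\ref{thm:maindiff:typical} without giving an explicit proof, and your argument is precisely the natural two-line justification (apply the bound $\rho^{d/4}$ from the theorem, then note that the hypothesis on $d$ together with $0\le\rho<1$ forces $\rho^{d/4}<\varepsilon_s$). The only minor remark is that the bound $0\le\rho<1$ is not restated explicitly in Theorem~\ref{thm:maindiff:typical} itself but is inherited from Theorem~\ref{thm:maindiff} (via Remark~\ref{rem:chibound:rho}), so you might cite that chain rather than Theorem~\ref{thm:maindiff:typical} alone.
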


\subsection{A Typical Partition is a Fixed Point of the k-Means Algorithm}

We observe that Theorem \ref{thm:maindiff:typical} applies to any single sample in a ``typical'' partition that satisfies the assumptions. 
Therefore, using the union bound, we proceed to bound the probability that a partition that satisfies the assumptions is not a fixed point
of the k-means algorithm for a dataset sampled from a GMM model as described in Theorem \ref{thm:maindiff:typical}. 
\begin{corollary}\label{cor:maindiff:typical:single}
    Let $\varepsilon_p > 0$ be a small constant and $d>\frac{4 \log\left( n/\varepsilon_p \right)}{\log\left( 1/\rho \right)}$. 
    
    Then, under the assumptions of Theorem \ref{thm:maindiff:typical}, we have

    \begin{equation}
        Pr \left( \exists j: \left\| x_j - \mu_{\bar{z}(j)} \right\|^2 < \left\| x_j - \mu_{z(j)} \right\|^2  \right) < \varepsilon_p.
    \end{equation}
    where $z(j)$ is $j$'s current cluster assignment and $\bar{z}(j)$ is the other cluster,

\end{corollary}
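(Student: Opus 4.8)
The plan is to upgrade the single-sample guarantee of Theorem \ref{thm:maindiff:typical} into a statement about the entire partition via a union bound over the $n$ samples. First I would fix the notation: for each sample index $j$, let $A_j$ denote the event $\{\|x_j - \mu_{\bar{z}(j)}\|^2 < \|x_j - \mu_{z(j)}\|^2\}$ that $x_j$ strictly prefers the opposite cluster center. The partition is a fixed point of the k-means iteration precisely when no sample wishes to move, so the event that the partition fails to be a fixed point is exactly $\bigcup_{j=1}^n A_j$. Theorem \ref{thm:maindiff:typical} supplies, for each fixed $j$, the bound $Pr(A_j) \leq \rho^{d/4}$, where $\rho$ is the quantity in Equation (\ref{eq:maindiff:typical:rho}). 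The point I would emphasize is that this bound is \emph{uniform}: the same $\rho^{d/4}$ controls every sample, irrespective of which subset it lies in, because the $\rho$ of Theorem \ref{thm:maindiff:typical} is already the value associated with the worst admissible cluster sizes under the constraint (\ref{eq:maindiff:typical:sizes}).

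Next I would apply the union bound directly. The events $A_j$ share the two cluster centers and are therefore generally dependent, but the union bound $Pr(\bigcup_j A_j) \leq \sum_{j=1}^n Pr(A_j)$ requires no independence, so this dependence is immaterial to the argument. Combining it with the per-sample estimate yields $Pr(\bigcup_j A_j) \leq n\,\rho^{d/4}$.

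Finally I would translate the hypothesis on $d$ into the claimed bound. Since $0 \leq \rho < 1$ by Theorem \ref{thm:maindiff}, we have $\log(1/\rho) > 0$, so multiplying the assumed inequality $d > \frac{4\log(n/\varepsilon_p)}{\log(1/\rho)}$ through by $\tfrac14\log(1/\rho) > 0$ and exponentiating gives $\rho^{d/4} < \varepsilon_p/n$, whence $n\,\rho^{d/4} < \varepsilon_p$. Chaining this with the union bound completes the proof.

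I do not expect a genuine obstacle here; the analytic content resides entirely in Theorem \ref{thm:maindiff:typical}, and this corollary is a packaging step. The one point warranting care is the uniformity of the per-sample bound: one must confirm that $\rho^{d/4}$ simultaneously upper-bounds $Pr(A_j)$ for samples in $S_a$ and in $S_b$, even though the underlying expression $\rho(\sigma,\tau,s_C,s_T)$ of Theorem \ref{thm:maindiff} is not symmetric in $s_C$ and $s_T$. Because the union bound is loose, paying a factor of $n$ costs only an additive $\log n$ inside the dimension requirement, so the exponential-in-$d$ decay of the single-sample switching probability is more than strong enough to control all $n$ samples at once.
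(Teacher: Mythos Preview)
Your proposal is correct and follows essentially the same route as the paper's own proof: apply the union bound over the $n$ samples, invoke the uniform per-sample estimate $Pr(A_j)\le\rho^{d/4}$ from Theorem~\ref{thm:maindiff:typical}, and use the hypothesis on $d$ to conclude $n\rho^{d/4}<\varepsilon_p$. Your explicit remark that the bound is uniform in $j$ because the $\rho$ of Theorem~\ref{thm:maindiff:typical} already corresponds to the worst admissible cluster sizes is a worthwhile clarification that the paper leaves implicit.
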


\begin{proof}

    By the union bound (Equation (\ref{fact:union})), we have
    \begin{equation}
        Pr \left( \exists j: \left\| x_j - \mu_{\bar{z}(j)} \right\|^2 < \left\| x_j - \mu_{z(j)} \right\|^2  \right) \leq
        \sum_j Pr \left( \left\| x_j - \mu_{\bar{z}(j)} \right\|^2 < \left\| x_j - \mu_{z(j)} \right\|^2  \right).
    \end{equation}
    Substituting Equation (\ref{eq:maindiff:typical:pr}), we obtain
    \begin{equation}
        \leq n \rho^{d/4} \leq n \rho^{\left( \frac{4 \log\left( n/\varepsilon_p \right)}{\log\left( 1/\rho \right)} \right)} = \varepsilon_p.
    \end{equation}

\end{proof}

\subsection{Almost All Partitions are Fixed Points of the k-Means Algorithm}

As $n$ grows, the size of the vast majority of clusters becomes very concentrated around $n/2$ (Fact \ref{fact:hoeffding:binomial}).
Therefore, our bounds apply to all partitions except for a small $(1-\delta)$ fraction of the partitions, as formulated in the following corollary.

\begin{restatable}[]{corollary}{cormaindifftypicalall}\label{cor:maindiff:typical:all}

    Let $\delta>0$ and $\varepsilon>0$ be small constants. 
    Let $q>\sqrt{2} \sqrt{-\log \left(\frac{\delta }{4}\right)}$.
    Let $n>2 \left(q^2+2\right)+2 \sqrt{q^4+4 q^2}$ be the number of samples. 
    Let 
    \begin{equation}
    d>\frac{4 \left(\log \left(\frac{n}{\epsilon }\right)+n \log (2)\right)}{\log \left(\frac{1}{\rho }\right)}
    \end{equation}
    
    Then, under the assumptions of Theorem \ref{thm:maindiff:typical} on the sampling of the dataset, we have
    \begin{equation}\label{eq:maindiff:typical:all}
    Pr \left( \text{Number of fixed points} <  (1-\delta) \text{Number of partitions} \right) < \varepsilon.
    \end{equation}

\end{restatable}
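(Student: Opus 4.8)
The plan is to decouple two independent sources of structure: a purely combinatorial count of how many partitions are ``typical'' (a deterministic statement), and a probabilistic union bound over the random dataset showing that, with high probability, \emph{every} typical partition is simultaneously a fixed point. The stated bound then follows by intersecting the two.

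First I would fix the total number of (labeled) assignments of $[n]$ to two clusters as at most $2^n$, and call a partition \emph{typical} if both cluster sizes lie in the interval (\ref{eq:maindiff:typical:sizes}). Interpreting a uniformly random partition as assigning each index independently to one of the two clusters, $|S_a|$ is Binomial$(n,1/2)$, so Fact \ref{fact:hoeffding:binomial} gives $Pr\bigl( \bigl| |S_a| - n/2 \bigr| \geq q\sqrt{n}/2 \bigr) \leq 2\exp(-q^2/2)$; since $|S_b| = n - |S_a|$, this controls both sizes at once. The hypothesis $q > \sqrt{2}\sqrt{-\log(\delta/4)}$ makes this at most $\delta/2$, so the number of typical partitions is at least $(1-\delta/2)$, hence at least $(1-\delta)$, times the total number of partitions. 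The lower bound on $n$ guarantees that the typical size interval lies strictly above $1$ and that $\sigma$ from (\ref{eq:maindiff:typical:sigma}) satisfies the requirements of Theorem \ref{thm:maindiff:typical}, so every typical partition meets the hypotheses of Corollary \ref{cor:maindiff:typical:single} (in particular $\rho \in [0,1)$ and $s_k > 1$).

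Next I would run the union bound over the dataset. Each typical partition is a fixed combinatorial object chosen independently of the data, so Corollary \ref{cor:maindiff:typical:single} applies to it verbatim: for a single typical partition $P$, $Pr(P \text{ is not a fixed point}) \leq \varepsilon_p$ whenever $d > 4\log(n/\varepsilon_p)/\log(1/\rho)$. A union bound over the at most $2^n$ typical partitions gives
\[
Pr(\exists \text{ typical } P \text{ that is not a fixed point}) \leq 2^n \varepsilon_p.
\]
Setting $\varepsilon_p = \varepsilon\, 2^{-n}$ makes the right-hand side at most $\varepsilon$, and substituting this choice into the threshold reproduces exactly the stated requirement $d > 4(\log(n/\varepsilon) + n\log 2)/\log(1/\rho)$, since $\log(n/\varepsilon_p) = \log(n/\varepsilon) + n\log 2$. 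Finally I would intersect the two facts: on the event that every typical partition is a fixed point, which has probability at least $1-\varepsilon$, the number of fixed points is at least the (deterministic) number of typical partitions, which is at least $(1-\delta)$ times the total, yielding $Pr(\text{Number of fixed points} < (1-\delta)\,\text{Number of partitions}) < \varepsilon$.

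The main obstacle is conceptual bookkeeping rather than a hard estimate: one must keep the combinatorial count (deterministic, obtained by reinterpreting Hoeffding as a fraction of partitions) cleanly separated from the data randomness (handled by the union bound), and verify that each typical partition genuinely satisfies the independence-from-data and size ($s_k > 1$) hypotheses needed to invoke Corollary \ref{cor:maindiff:typical:single}. The only nontrivial trade-off is the union bound's $2^n$ factor, which is precisely what forces the $n\log 2$ term in the dimension requirement; noting that the labeled/unlabeled convention and the exclusion of degenerate partitions change the total count only by a bounded factor that cancels in the ratio disposes of the remaining ambiguity.
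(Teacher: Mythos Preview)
Your proposal is correct and follows essentially the same approach as the paper: use Fact~\ref{fact:hoeffding:binomial} to show deterministically that at least a $(1-\delta)$ fraction of partitions are typical, then take a union bound over the at most $2^n$ typical partitions, invoking Corollary~\ref{cor:maindiff:typical:single} for each one with $\varepsilon_p=\varepsilon\,2^{-n}$ to obtain the stated dimension threshold. If anything, your separation of the deterministic combinatorial count from the probabilistic union bound is laid out more explicitly than in the paper's own proof.
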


The proof is presented in Appendix \ref{sec:proofs}. The main idea is similar to the proof of Corollary \ref{cor:maindiff:typical:single}.

\begin{remark}[Asymptotics]
    We note that the value of $d$ required to satisfy the conditions of Corollary \ref{cor:maindiff:typical:all} depends on $n$ also through $\rho$. 
    If we set $\sigma$ to depend on $n$ as in Equation (\ref{eq:maindiff:typical:sigma}), and follow the asymptotic analysis in Remark \ref{rem:maindiff:typical:asymptotics}, we obtain that the conservative threshold $d$ in this crude conservative analysis behaves like:
       $ d_{th} \approx  \frac{\beta ^4 n^3 \log{2} }{ \left(\beta ^2-1\right)^2 }$.
    This is a conservative estimate, and it is interesting to investigate how the phenomena discussed here affect the convergence of the k-means algorithm in more modest dimensions.
\end{remark}

\section{Numerical Results}\label{sec:numerical}

In this section, we present results for numerical experiments demonstrating Theorems \ref{thm:maindiff} and \ref{thm:maindiff:typical} and their corollaries. 
The code was written in Python using JAX \cite{jax2018github}. 
In order to run many experiments in parallel, we used 16 CPU cores and 128GB of RAM. 
However, single instances of the k-means algorithm at the scale presented here require trivial computational resources. The code required to reproduce all the numerical experiments is available at \href{https://github.com/DSilva27/Observation-on-kmeans}{https://github.com/DSilva27/Observation-on-kmeans}.

\subsection{Probability of cluster reassignment after a step of k-means, special case $|S_T| = |S_C|$}\label{sec:numerical_results:no_reassignment:main}

The first experiment examines Theorem \ref{thm:maindiff} in the special case $|S_T| = |S_C|$ (which is computed explicitly in Corollary \ref{cor:special:tau1:sTeqsC} in the appendix). 

Each instance is an independent experiment where the data (centroids and samples) are generated according to the probabilistic model defined in Theorem \ref{thm:maindiff:typical}, with $n=40$.
We consider two cases: in the ``random'' case, $20$ of the samples are chosen at random with equal probabilities and 
are assigned to $S_C$, the remaining samples are assigned to $S_T$; one sample in $S_C$ is selected in random to the the subject $j$ of the experiment.
In the ``worst case'' we generate data with $|S_1|=21$ and $|S_2|=19$, then define clusters $S_T = S_1 \setminus \{j\}$ and $S_C = S_2 \cup \{j\}$ by moving a single sample $j$ from its true cluster to the wrong cluster. In each instance we examine whether the next k-means iteration moves sample $j$ to the other cluster.

We repeat the experiment $10^5$ times for each of several values of $\sigma^2$ and $d$, and plot in Figure \ref{fig:main_theorem_results} the empirical probability that the sample $j$ moves from its original cluster in the first iteration.  The error interval in the plot is Wilson's interval (See Definition \ref{def:Wilson_interval}). The assumptions of the theorem hold for $\sigma^2 > 18.05$; the bound appears to hold and not be tight.

\begin{figure}
    \centering
    \includegraphics[width=1.0\linewidth]{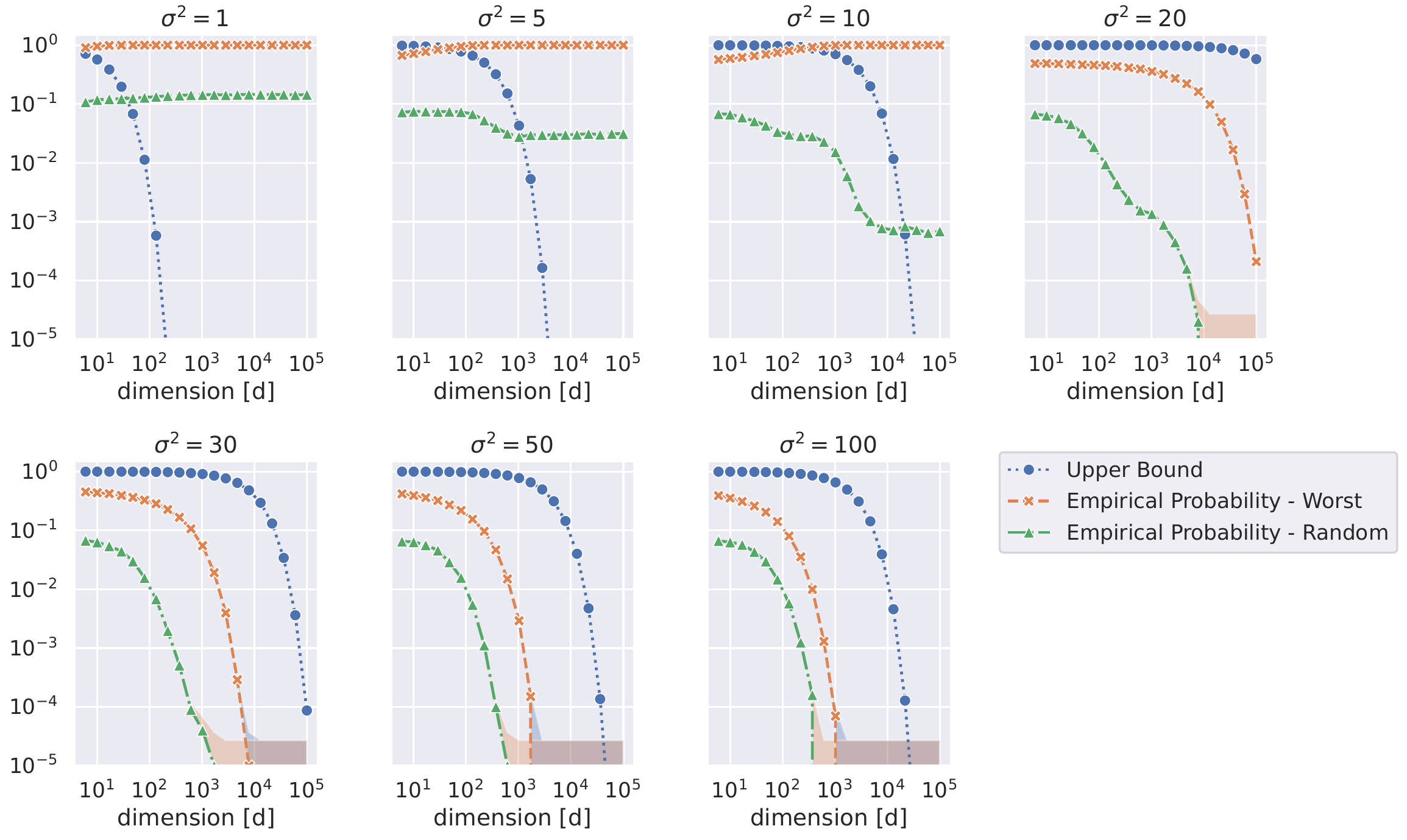}
    \caption{\textbf{Numerical experiments for Theorem \ref{thm:maindiff}}. Ratio of instances where a sample $x_j$ switches clusters after a step of k-means under the probabilistic model defined in Theorem \ref{thm:maindiff}. The ratio is computed for ``random`` (green triangle) and ``worst`` (yellow x) case partitions (see Section \ref{sec:numerical_results:no_reassignment:main}). Sample $j$ is identified as the sample in the ``wrong`` cluster for the ``worst`` case scenario, and a random sample for the ``random`` case. The error interval is Wilson's interval (See Definition \ref{def:Wilson_interval}).
    In addition, we plot the bound provided by Theorem \ref{thm:maindiff} (blue circles). 
    The conditions of Theorem \ref{thm:maindiff} are satisfied by $\sigma^2 > 18.05$.
    }
    \label{fig:main_theorem_results}
\end{figure}

\subsection{Probability of Cluster Reassignment after a Step of k-means: Typical Partitions}\label{sec:numerical_results:no_reassignment:typical}

The following experiment examines Theorem \ref{thm:maindiff:typical} for a typical partition.
Each instance is an independent experiment where the data (centroids and samples) are generated according to the probabilistic model defined in Theorem \ref{thm:maindiff:typical}, with $n=40$.
The assignment $\{ z^{\text{True}}_i \}$ of samples to true centers is i.i.d. with equal probabilities so that the true clusters can vary in size. 
The partition of the samples into subsets for the k-means algorithm is also i.i.d. with equal probabilities and independent of the assignment to true clusters.
In each instance, we examine whether the next k-means iteration moves sample $j$ to the other cluster.

We repeat the experiment $10^4$ times for each of several values of $\sigma^2$ and $d$ and plot in Figure \ref{fig:main_theorem_results} the ratio of instances where sample $j$ moved to the other cluster. The error interval is Wilson's interval (See Definition \ref{def:Wilson_interval}).
We express $\sigma^2$ in terms of $\beta$ as defined in Equation (\ref{eq:maindiff:typical:sigma}).
The threshold for the theorem to hold is $\beta=1$. The bound appears to hold even at slightly lower $\beta$ and it does not appear to be tight.

\begin{figure}
    \centering
    \includegraphics[width=1.0\linewidth]{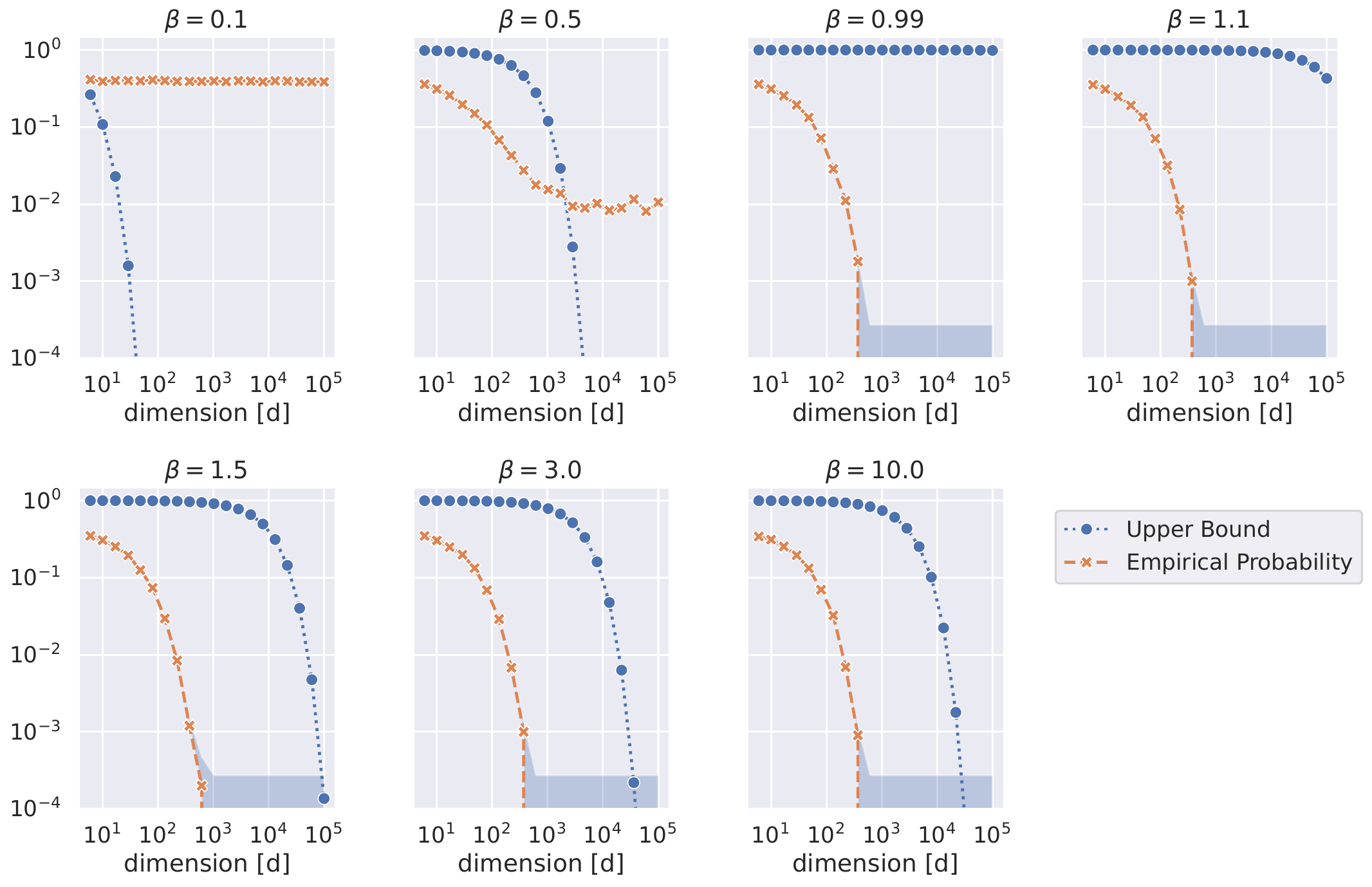}
    \caption{\textbf{Numerical experiments for Theorem \ref{thm:maindiff:typical}.} Ratio of instances where sample $j=0$ switches clusters after a step of k-means under the probabilistic model defined in Theorem \ref{thm:maindiff:typical}. The data are generated as described in Section \ref{sec:numerical_results:no_reassignment:typical}, and assigned to clusters randomly, independently and with equal probability. The error interval is Wilson's interval (See Definition \ref{def:Wilson_interval}).
    The theoretical upper bound, provided by Theorem \ref{thm:maindiff:typical}, is represented as blue circles.
    }
    \label{fig:typical_partition}
\end{figure}

\subsection{k-means in Practice}\label{sec:numerical_results:kmeans_vs_pca}

The following experiment illustrates the relationship between the theoretical findings in this paper and the performance of the k-means algorithm 
using data sampled from the Gaussian Mixture Model (GMM).
Furthermore, it demonstrates how the commonly used PCA dimensionality reduction heuristics (see the discussion in Appendix \ref{sec:settings} and \cite{zha2001spectral,ding2004k}) improve the performance of the algorithm. 
In addition to the standard dimensionality reduction, we introduce a simpler variant (see Appendix \ref{sec:appendix_pca_splitting}), where clustering is performed based on the sign of the first dimension of the data when projected onto the first principal component.
We measure the performance here in terms of the ability of the algorithm to recover the true underlying clustering in terms of the Normalized Mutual Information (NMI) score; see Section \ref{sec:preliminaries:nmi} for details.

\begin{remark}
    We note that we do not require the ``true`` clustering to be a better clustering in terms of the k-means loss; the lowest NMI score does not necessarily correspond to the lowest k-means loss. 
    However, the NMI score is more interpretable across different values of $\sigma$ and $d$, and as the dimension grows it becomes more closely related to the loss.
    Additional results related to the loss are presented in Appendix \ref{sec:additional_results}.
\end{remark}

Each instance of the experiment is generated using the probabilistic model described in Theorem \ref{thm:maindiff}, with $n=40$ data points such that the true clusters have equal size.
In each case, we apply three algorithms: (1) The standard k-means algorithm, (2) k-means on PCA-reduced data (with reduced dimension $d_{\text{PCA}} = 4$), and (3) simple splitting based on the sign of the first principal component (see Appendix \ref{sec:appendix_pca_splitting}). 
We repeat the experiment $100$ times for each value of $d$ and $\sigma^2$.
In the first set of experiments, we initialize the k-means algorithm using randomly selected equal size clusters, with the initial centroids corresponding to the clusters' averages. 
In the second set of experiments, we initialize k-means by designating two randomly selected samples as cluster centers. 
In the third set of experiments, we initialize the k-means algorithm using the popular k-means++ initialization \cite{arthur2006k}.
The results of the experiment are presented in Figure~\ref{fig:experiments:nmi}.
In each set of experiments, we use the same strategy to initialize the standard k-means algorithm and the k-means on PCA reduced data.

Counterintuitively, these experiments demonstrate how the k-means algorithm performs {\em{worse}} as the dimension grows beyond some point, although the additional dimensions can only add information; this added information is exploited by dimensionality reduction methods.
This effect is particularly pronounced in the case of random partition initialization (Figure \ref{fig:experiments:nmi}A), although it is also evident in the case of other initialization methods (Figures \ref{fig:experiments:nmi}B-C). 
Additional results and a discussion are provided in Appendix \ref{sec:additional_results}.

The experiments demonstrate that the k-means algorithm converges to suboptimal fixed points that can be improved upon easily even when some of the information is redacted by dimensionality reduction. The results also highlight the (known) impact of initialization on the quality of the output.
We note that while dimensionality reduction and good initializations are used extensively in applications of the classic k-means algorithm, they are not always immediately generalizable to other applications that motivated this work. 

\begin{figure}
    \centering
    \includegraphics[width=0.7\linewidth]{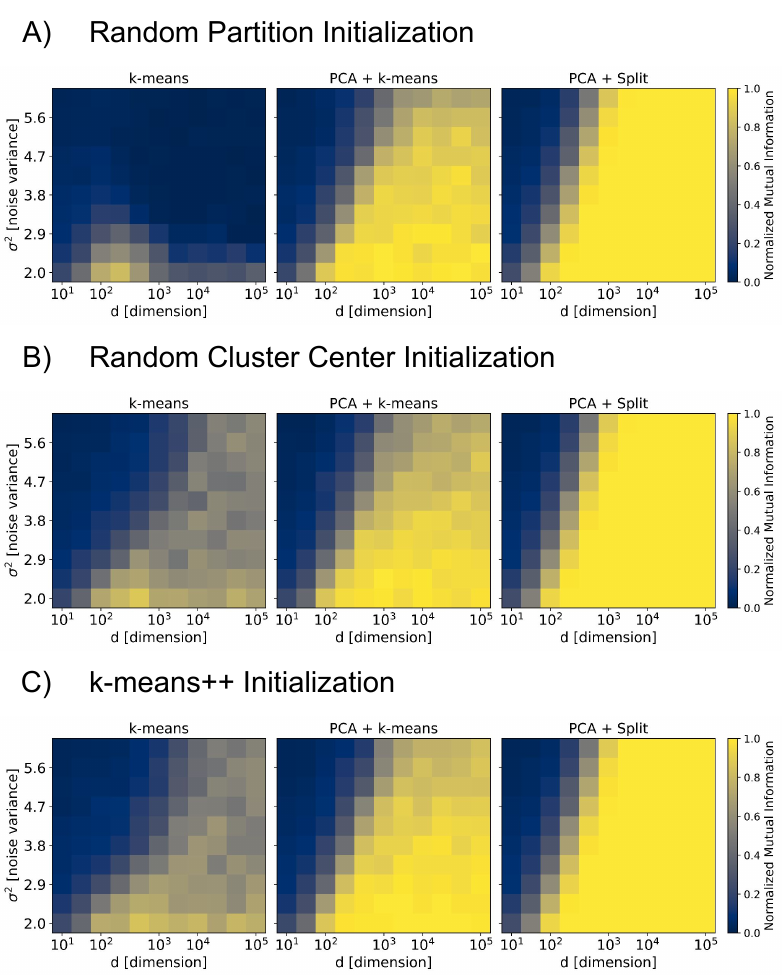}
    \caption{Normalized Mutual Information (NMI) between ground truth assignments and those obtained from the different clustering approaches described in Section \ref{sec:numerical_results:kmeans_vs_pca}. The NMI metric is explained in Section \ref{sec:preliminaries:nmi}. An NMI value of 1 indicates perfect correlation, while a value of 0 signifies no mutual information between two assignments.}
    \label{fig:experiments:nmi}
\end{figure}

\section{Discussion and Conclusions}\label{sec:conclusions}

The analysis of the k-means algorithm in this paper provides one explanation for the observed failure of the algorithm in high-dimensional problems. In sufficiently high noise and dimension, a randomly sampled dataset would, with high probability, have the property that all partitions (with the exception of a small number of atypical partitions) are fixed points of the k-means algorithm, which means that the algorithm is guaranteed to terminate at the initial partition: the partition provided as initialization or a partition computed directly from centers that are provided as the initialization. 
While it is known that the initialization plays an important role in the k-means algorithm and other algorithms, our work points to an extreme case where the initialization is the only thing at play.
The argument in this proof is conservative, and it would be interesting to investigate how the phenomenon influences lower noise and dimensions, and the likelihood that the algorithm gets stuck in sub-optimal fixed points when not all (typical) partitions are fixed points.

In subsequent papers about this work we plan to discuss how alternative algorithms compare to the k-means algorithm. 
Together with these alternatives, we plan to examine the scope of this phenomenon and the performance of alternative algorithms in different settings and in realistic datasets. 
Furthermore, we plan to consider a broader family of problems and algorithms that include EM and more complex settings, leading back to cryo-EM applications that were the motivation for this work. 
As we noted in the appendix, there are many settings where a good initialization can mitigate the effects discussed here. Similarly, in many settings, there are simple spectral alternatives to k-means algorithm that outperform it, and dimensionality reduction preprocessing that improves it. 
We introduced the masked-GMM model to provide a relatively simple setup where the phenomena discussed here apply, but where the spectral methods and initialization methods are not easily applicable.

\begin{ack}

The authors would like to thank 
Tamir Bendory, Amnon Balanov,
Amit Singer, Fred Sigworth, Sheng Xu, Zhou Fan and Yihong Wu for helpful discussions.

The work was supported by NIH/NIGMS (R01GM136780), the Alfred P. Sloan Foundation (FG-2023-20853), AFOSR (FA9550-21-1-0317), the Simons Foundation (1288155), and DARPA/DOD (HR00112490485).
\end{ack}

\bibliography{bibliography}
\bibliographystyle{ieeetr}

\clearpage
\appendix

\section{Model and Problem Formulation}\label{sec:model}

\subsection{Model: The Gaussian Mixture Model (GMM)}\label{sec:GMM}
The Gaussian Mixture Model (GMM) is a probabilistic model often used to model clustering and density estimation problems. 
As its name suggests, the model assumes that the samples are generated from a mixture of Gaussian distributions. 

Let $X = \{x_1, x_2, ..., x_n\}$ be the observed samples, where each $x_i$ is a $d$-dimensional vector. The GMM assumes that each data point $x_i$ is generated from one of $K$ Gaussian components, each component characterized by its mean $\mu_k \in \mathbb{R}^d $ and covariance matrix $\Sigma_k$. 
The component $z_i$ from which $x_i$ was generated is a latent random variable with a categorical distribution with weight $w_k$ for each component.
In summary:
\begin{equation}
    p(z_i = k) = w_k,
\end{equation}
\begin{equation}
    p(x_i, z_i) = p(z_i) p(x_i | z_i) = w_{z_i} \cdot \phi(x_i | \mu_{z_i}, \Sigma_{z_i}),
\end{equation}
where $\phi(x_i | \mu_k, \Sigma_k)$ denotes the probability density function of a Gaussian distribution with mean $\mu_k$ and covariance matrix $\Sigma_k$,
\begin{equation}
    p(x_i) = \sum_{z_i=1}^{K} p(x_i, z_i) = \sum_{k=1}^{K} w_k \cdot \phi(x_i | \mu_k, \Sigma_k).
\end{equation}

{\em In this paper we will restrict our attention to the case where the weights $w_k$ are uniform, i.e., $w_k = 1/K$ for all $k$, and the
covariance matrices are known and isotropic, i.e., $\Sigma_k = \sigma_{\text{noise}}^2 I_d$ for all $k$:}
\begin{equation}\label{eq:gmm:simple:data}
p(x_i  ) = \sum_{k=1}^{K} \frac{1}{K} \cdot \phi(x_i | \mu_k, \sigma_{\text{noise}}^2 I_d)
\end{equation}

\begin{remark}
    One of the difficulties in the statistical problem of characterizing clusters in the GMM in the high-dimensional, finite-data settings, is the large number of parameters that must be estimated. This is a problem primarily due to the covariance matrices, which have $O(d^2)$ parameters each. 
    In the applications that motivated this work, the problem is mitigated using strong assumptions on the structure of the covariance; therefore, this work is not intended to address this issue.  
    Our simple formulation of the problem sidesteps this issue altogether by assuming that the covariance matrices are known and isotropic.
    We hypothesize that the phenomena discussed in this paper are present in more general settings. 
\end{remark}

{\em For the purpose of our analysis, we will consider cluster centers sampled i.i.d. from a Gaussian distribution:}
\begin{equation}\label{eq:gmm:simple:mu}
    \mu_k \sim N(0, \tau^2 I_d).
\end{equation}

\subsection{The k-means Formulation of Recovering Clusters or Cluster Means}\label{sec:kmeans:formulation}

The statistical problem of recovering cluster means in the GMM model involves estimating the unknown parameters for each component based on the observed data. 
In this paper, we restrict our attention to the classic k-means problem of assigning samples to clusters $\left\{ z_i \right\}_{i=1}^n$ and estimating the 
cluster means $\left\{ {\hat{\mu}}_k \right\}_{k=1}^K$,
minimizing the following loss:
\begin{equation}\label{eq:kmeans:loss0}
    \left\{ z_i \right\}_{i=1}^n,  \left\{ {\hat{\mu}}_k \right\}_{k=1}^K \in {\arg\min}_{ \left\{ z_i \right\}_{i=1}^n, \left\{ \mu_k \right\}_{k=1}^{K}} 
    \sum_{i=1}^{n} \min_k \| x_i- \mu_{z_i} \|^2.
\end{equation}

The classic algorithm for solving this problem is the Lloyd k-means algorithm\cite{lloyd1982least}, 
which iteratively assigns each data point to the nearest cluster center and then updates the cluster center to be the mean of the samples assigned to them.
A more detailed description is provided in Section \ref{sec:preliminaries:kmeans}.

\subsection{Settings: Finite-Sample, High-Dimensional, High-Noise Regime}\label{sec:settings}

We will examine the behavior of Lloyd's k-means algorithm in a regime characterized by relatively high noise, where the standard deviation of the noise $\sigma_{\text{noise}}$ often exceeds the signal level $\tau$, and the sample size $n$ is moderate but finite. Specifically, we will consider the scenario where $\frac{\tau^2}{\sigma_{\text{noise}}^2 \cdot n/K} \approx 1$.
Our interest lies in the high-dimensional regime, where the dimensionality $d$ is large and conceptually ``goes to infinity.'' We hypothesize that the phenomena observed in this high-noise, high-dimensional context are indicative of behavior in more general settings.

This scope invites several questions. One of which is {\bf ``Isn't this regime trivial?''} Informally, in simple GMM data, when $d$ is very large, a projection onto the components in the Principal Component Analysis (PCA) of the data easily separates the clusters. In this case, a projection of the data onto the first few principal components makes it easy to separate the clusters; averaging the points in each class yields centers which are about the best we can hope for.
Algorithms using PCA followed by k-means have been proposed, for example, in \cite{zha2001spectral,ding2004k} and are commonly used in practice (see Section \ref{sec:additional_experiments}).
This approach is not immediately applicable to all relevant regimes, but it is applicable to broad relevant regimes. Examples are presented in Section \ref{sec:additional_results:kmeans_in_practice}.

Nevertheless, we study k-means with such high-dimensional regimes in mind because the basic observations in this work impact these algorithms beyond the conservative regime described in this preliminary conservative analysis. 
More importantly, while we discuss the simplest GMM problem, the ideas are applicable to more complicated problems and specifically to applications in cryo-EM where alternative algorithms are not as straightforward. 
In the interest of brevity and to sidestep a tedious discussion of idiosyncrasies of applications such as cryo-EM, we find it useful to consider a modified version of the GMM problem, where each observation is viewed through the lens of observation-specific (known) operator. For concreteness, it is particularly useful to consider the special case of {\bf masked-GMM}, where the observations are viewed through a binary mask. This version can be reformulated as follows:
\begin{equation}\label{eq:maskedGMM}
y_i = A_i \circ x_i,
\end{equation}
where $\circ$ is the Hadamard (element-wise) product, and $A_i$ is a (known) binary vector that masks the $x_i$ generated by the GMM.
The PCA + k-means approach does not immediately apply to masked-GMMs with very sparse masks. Versions of the masked-GMM exist in applications, and they give a glimpse into the difficulties in more complicated applications without delving into details.
Our analysis can be generalized to the masked-GMMs with small modifications; we defer the explicit derivation to future work since the details depend on the parameters of the masks.

A closely related issue is that we do not discuss the initialization (and runs with multiple initializations) in detail in this paper; it is well known that a good initialization can make a significant difference, as demonstrated in Section \ref{sec:additional_results:kmeans_in_practice}. 
Again, the question of initialization is more complicated in more complex setups such as the masked-GMM and a larger number of classes. 
Furthermore, a possible conclusion from this paper is that the initialization might be {\em{the only thing}} at work in some cases, since essentially every partition is a fixed state of the algorithms in some regimes.

Another question about this setting is {\bf ``Is this useful?''} When the number of samples is finite and the noise is high, the cluster centers are a very noisy version of the true cluster centers; sufficiently noisy that one could ask if they are interesting at all.
However, the clustering problem is still valid and in many relevant settings, the clusters are well separated and can be recovered with high probability, even if not using the vanilla k-means algorithm; we touch on this in experiments in Section \ref{sec:additional_results:kmeans_in_practice}.
Furthermore, in more complex settings, which we will investigate in subsequent phases of this work, the equivalent of the cluster centers themselves provide useful information despite the noisy estimates.

\clearpage

\section{Notation}

\begin{table}[ht!]
\centering
\begin{tabular}{ll}
\hline
\textbf{Symbol} & \textbf{Description} \\ \hline
$d$ & Dimension of the samples \\
$n$ & Number of samples \\
$K$ & Number of clusters \\
$I_d$ & Identity matrix \\
$x_i$ & $i$-th sample in $\mathbb{R}^d$ \\
$\mu_k$ & Mean of the $k$-th cluster \\
$\sigma^2$ & Variance of the noise in the data \\
$\tau^2$ & Variance of the cluster means (for brevity we often set $\tau=1$) \\
$z_i$ & Cluster assignment of the $i$-th sample \\
$S_k$ & Set of samples assigned to the $k$-th cluster \\
$S-j$ & Set of samples excluding the $j$-th sample \\
$s_k$ & Size of the $k$-th cluster ($|S_k|$) \\
$\mathcal{N}(\mu, \Sigma)$ & Multivariate normal distribution with mean $\mu$ and covariance $\Sigma$ \\
$\phi(x | \mu, \Sigma)$    & The probability density of a multivariate normal distribution with mean $\mu$ and covariance $\Sigma$ \\
$\chi^2_d$ & Chi-squared distribution with $d$ degrees of freedom \\
$\mathbb{E}[\cdot]$ & Expectation operator \\
$\|\cdot\|$ & Euclidean norm \\ 
$\succeq_{FSD}$ and $\preceq_{FSD}$ & First-order stochastic dominance and first-order stochastic dominance (Definition \ref{def:FSD})\\
\hline
\end{tabular}
\caption{Notation used in the paper.}
\label{tab:notation}
\end{table}

\clearpage

\section{Preliminaries}\label{sec:preliminaries}

This section provides a broader background and standard results that are used in the paper.

\subsection{Lloyd's k-Means Algorithm}\label{sec:preliminaries:kmeans}

The k-means problem has several equivalent formulations. 
One formulation has been presented in Equation (\ref{eq:kmeans:loss0}).
An equivalent formulation states that the purpose of the algorithm it to find a partition or a set of $K$ clusters $S=\{S_1,S_2, \dots, S_K\}$
that minimize the following expression:
\begin{equation}\label{eq:kmeans:loss1}
     \arg \min_S \sum_{k=1}^K \sum_{j \in S_k} \| x_j - \mu_k \|^2 .
    \end{equation}
where $\mu_k= \frac{1}{|S_K|} \sum_{i \in S_k} x_i$.

The k-means problem is known to be NP-Hard \cite{aloise2009np}. Lloyd's algorithm for k-means \cite{lloyd1982least}, to which we refer as {\em{the k-means algorithm}}, is an iterative algorithm intended to find an approximate solution.
The algorithm is initialized with some centers or clusters (a particularly good initialization is known as the k-means++ algorithm \cite{arthur2006k}). It then alternates between assignment and averaging steps.

In the assignment step, each sample is assigned to the nearest cluster mean. The class assignment $z_{j}^{(t)}$ of the point $x_j$ at iteration $t$ is given by:
\begin{equation}\label{eq:kmeans:e}
z_{j}^{(t)} =  \arg \min_k \| x_j - \mu_k^{(t)} \|^2 .
\end{equation}

In the averaging step, the cluster means for the next iteration are defined as the average of the samples assigned to each cluster:
\begin{equation}\label{eq:kmeans:m}
    \mu_k^{(t+1)} = \frac{1}{|S_K^{(t)}|} \sum_{i \in S_k^{(t)}} x_i,
\end{equation}
where $S_k^{(t)} = \{ j | z_{j}^{(t)} = k \}$ is the set of samples assigned to cluster $k$ at iteration $t$.

\begin{remark}
    In the case where one of the clusters $S_k^{(t)}$ is empty at some step, $\mu_k^{(t+1)}$ is not well defined in Equation (\ref{eq:kmeans:m}). 
    This state is known as a degenerate state, and it is a common practical issue that arises when running the Lloyd k-means algorithm. 
    This state is clearly suboptimal, since (in general) moving any sample point into that empty cluster would decrease the loss in Equation (\ref{eq:kmeans:loss1}); several different strategies have been proposed to address this issue.
     For brevity, we exclude the case of an empty subset from our analysis in this paper.
\end{remark}

\subsection{Standard Results}

The following are standard textbook facts in statistics.

\begin{fact}[Adding Gaussian Variables]
    Let $\xi_1$ and $\xi_2$ be i.i.d. with a normal distribution $\xi_1,\xi_2 \sim N(0, 1)$ and let $a_1, a_2, b_1, b_2 \in \mathbb{R}$. 
    Then,
    \begin{equation}\label{eq:sum:gaussian}
        a_1 + b_1 \xi_1 + a_2 + b_2 \xi_2 \sim N(a_1 + a_2, b_1^2 + b_2^2)
    \end{equation}
\end{fact}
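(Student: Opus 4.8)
The plan is to prove this standard fact via moment generating functions (MGFs), which convert the sum of independent variables into a product and make the Gaussian form transparent. Since the statement is a routine textbook result, I do not anticipate any genuine mathematical difficulty; the only point requiring care is the appeal to a uniqueness theorem at the very end.

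First I would recall that a standard normal $\xi \sim N(0,1)$ has MGF $\mathbb{E}\left[e^{t\xi}\right] = e^{t^2/2}$ for every $t \in \mathbb{R}$, which follows from completing the square in the Gaussian integral. Next I would compute the MGF of each affine term $a_i + b_i \xi_i$: factoring out the deterministic shift gives $\mathbb{E}\left[e^{t(a_i + b_i \xi_i)}\right] = e^{a_i t}\,\mathbb{E}\left[e^{t b_i \xi_i}\right] = e^{a_i t}\,e^{b_i^2 t^2/2}$, where the last equality is the standard-normal MGF evaluated at $b_i t$.

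Then, writing $Y = (a_1 + b_1\xi_1) + (a_2 + b_2\xi_2)$ and invoking the independence of $\xi_1$ and $\xi_2$, the MGF of the sum factors as the product of the individual MGFs:
\[
M_Y(t) = e^{a_1 t}e^{b_1^2 t^2/2}\cdot e^{a_2 t}e^{b_2^2 t^2/2} = \exp\left((a_1+a_2)t + \tfrac{1}{2}(b_1^2+b_2^2)t^2\right).
\]
This is exactly the MGF of a normal distribution with mean $a_1+a_2$ and variance $b_1^2 + b_2^2$, which yields the claim.

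The only step that is not purely mechanical is this final identification: I would invoke the uniqueness theorem for moment generating functions, which applies here because the Gaussian MGF is finite on all of $\mathbb{R}$ (in particular on an open interval around the origin). If one prefers to avoid any integrability hypothesis, the identical computation carried out with characteristic functions $\varphi(t) = \mathbb{E}\left[e^{it\xi}\right] = e^{-t^2/2}$ goes through verbatim and relies only on the always-valid uniqueness theorem for characteristic functions; alternatively, a direct convolution of the two scaled-and-shifted Gaussian densities recovers the same result without transform methods. The main ``obstacle,'' such as it is, is purely expository---choosing which of these equivalent standard justifications to cite.
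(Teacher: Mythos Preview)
Your proof is correct. The paper does not supply a proof of this statement at all---it is listed among ``standard textbook facts in statistics'' and simply asserted---so your MGF argument (with the noted alternatives via characteristic functions or direct convolution) is a perfectly acceptable justification.
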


\begin{fact}
    Let $X \sim N(0, \sigma^2 )$ have a normal distribution. Let $-\inf <t< 1/2$.
    Then,
    \begin{equation}
        \mathbb{E}\left( exp\left(t X^2 \right) \right) = \frac{1}{\sqrt{1-2 t}}
    \end{equation}
\end{fact}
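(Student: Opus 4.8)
The plan is to evaluate the expectation directly as a one-dimensional Gaussian integral against the density of $X$. First I would write
$\mathbb{E}\left( \exp\left( t X^2 \right) \right) = \frac{1}{\sqrt{2\pi}} \int_{-\infty}^{\infty} \exp\left( t x^2 \right) \exp\left( -x^2/2 \right) \, dx$,
using the standard normal density. The condition $t < 1/2$ in the statement (rather than $t < 1/(2\sigma^2)$) signals that the intended case is $\sigma = 1$; the general $\sigma$ case follows by the same computation after the rescaling $t \mapsto t\sigma^2$.

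Next I would combine the two exponentials into a single quadratic in $x$, obtaining an integrand of the form $\exp\left( -\tfrac{1}{2}(1-2t) x^2 \right)$. The key observation is that, whenever $1 - 2t > 0$, this is (up to a multiplicative constant) the density of a centered Gaussian with variance $v = 1/(1-2t)$. I would then invoke the normalization of that Gaussian, $\int_{-\infty}^{\infty} \exp\left( -\tfrac{1}{2} x^2 / v \right) dx = \sqrt{2\pi v}$, to evaluate the integral and cancel the $\frac{1}{\sqrt{2\pi}}$ prefactor, which leaves exactly $\frac{1}{\sqrt{1-2t}}$.

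The only real subtlety — and the step that pins down the stated range of $t$ — is the convergence of the integral: the quadratic coefficient $1 - 2t$ must be strictly positive, which is precisely the hypothesis $t < 1/2$. For $t \geq 1/2$ the integrand fails to decay and the expectation diverges, so no computation is required there. There is no genuine obstacle here; the argument is a routine completion-of-the-square and Gaussian-integral evaluation, and the main thing to be careful about is restricting to $t < 1/2$ so that the resulting Gaussian integral is finite.
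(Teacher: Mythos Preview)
Your proposal is correct; the computation by combining the exponentials into a single Gaussian integral and invoking the normalization constant is the standard route, and you correctly flag that the formula as stated matches the $\sigma=1$ case (so that the constraint is $t<1/2$ rather than $t<1/(2\sigma^2)$). The paper does not actually prove this statement---it is listed among the ``Standard Results'' in the preliminaries and is simply asserted as a textbook fact---so there is no paper proof to compare against, and your argument is exactly the kind of routine verification one would expect.
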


\begin{fact}[Special Case of Cochran's Theorem]
Let $X \sim N(0, \sigma^2 I_d)$ be a $d$-dimensional Gaussian random vector with mean zero and covariance matrix $\sigma^2 I_d$, where $I_d$ is the $d \times d$ identity matrix.
Then 
\begin{equation}\label{eq:cochran}
\| X \|^2  \sim \sigma^2 \chi^2_d,
\end{equation}
where $\chi^2_d$ denotes the chi-squared distribution with $d$ degrees of freedom.
\end{fact}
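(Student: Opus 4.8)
The plan is to reduce the statement to the \emph{definition} of the chi-squared distribution by exploiting the isotropy of the covariance. First I would write $X = (X_1, \dots, X_d)^\top$ in coordinates. Because the covariance matrix is $\sigma^2 I_d$, the coordinates are jointly Gaussian, each with mean zero and variance $\sigma^2$, and with vanishing cross-covariances; for a jointly Gaussian vector, vanishing covariances imply independence, so the $X_i$ are in fact i.i.d. $N(0,\sigma^2)$. This independence step is the one place where a little care is warranted—it relies on the Gaussian-specific fact that uncorrelated \emph{jointly} Gaussian coordinates are independent, not merely uncorrelated—so I would state it explicitly rather than slip past it.

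Next I would standardize each coordinate by setting $X_i = \sigma Z_i$ with $Z_i \sim N(0,1)$ i.i.d., so that
\begin{equation}
\|X\|^2 = \sum_{i=1}^d X_i^2 = \sigma^2 \sum_{i=1}^d Z_i^2 .
\end{equation}
By the definition of the chi-squared distribution with $d$ degrees of freedom as the law of a sum of $d$ squared independent standard normals, we have $\sum_{i=1}^d Z_i^2 \sim \chi^2_d$. Multiplying by the deterministic constant $\sigma^2$ then gives $\|X\|^2 \sim \sigma^2 \chi^2_d$, which is exactly the claim.

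There is essentially no hard step here: beyond bookkeeping, the only genuine content is the independence of the coordinates, justified as above. If one prefers not to invoke the definition of $\chi^2_d$ directly, an alternative route computes the moment generating function of $\|X\|^2$ as the product $\prod_{i=1}^d \mathbb{E}[\exp(t X_i^2)]$ over the independent coordinates, evaluates each factor by applying the earlier Fact to the standardized variables $Z_i$ (yielding $(1-2t)^{-1/2}$ per coordinate after pulling out the scale $\sigma^2$), and recognizes the resulting $(1-2t)^{-d/2}$ as the moment generating function of $\chi^2_d$; uniqueness of the moment generating function then closes the argument. I would present the definitional proof as the main line and mention the moment-generating-function computation only as a confirmation.
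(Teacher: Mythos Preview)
Your proof is correct and is the canonical argument: independence of the coordinates from the diagonal covariance, standardization $X_i=\sigma Z_i$, and the definition of $\chi^2_d$. The paper does not actually prove this statement; it lists it among the ``standard textbook facts'' in the preliminaries and simply invokes it elsewhere, so there is nothing to compare beyond noting that your write-up supplies exactly the elementary justification the paper omits.
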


The above facts can be used to compute the moment-generating function of the $\chi^2$ distribution.
\begin{fact}[The Moment Generating Function of $\chi^2_d$]
Let $X \sim a \chi^2_d$. Then $E(X) = d a$ and $Var(X) = 2d a^2$. 
Let $t < 1/2$. Then
\begin{equation}\label{eq:chi2:mgf}
    M_X(t) = \mathbb{E} \left( \exp\left( t X \right) \right)= (1-2t)^{-d/2}
\end{equation}
\end{fact}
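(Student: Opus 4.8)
The plan is to reduce everything to the independence structure of a $\chi^2_d$ variable and then lean on the two preceding Facts. First I would recall that, by the Special Case of Cochran's Theorem (Equation (\ref{eq:cochran})) with $\sigma = 1$, a variable $W \sim \chi^2_d$ can be realized as $W = \|Z\|^2 = \sum_{i=1}^d Z_i^2$, where $Z = (Z_1, \dots, Z_d) \sim N(0, I_d)$, so the $Z_i$ are i.i.d. standard normals. This representation is the only structural input I need; everything else is a computation of moments.

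For the moment generating function I would write $M_W(t) = \mathbb{E}(\exp(t W)) = \mathbb{E}(\exp(t \sum_{i} Z_i^2))$ and use independence of the coordinates to factor the expectation into a product $\prod_{i=1}^d \mathbb{E}(\exp(t Z_i^2))$. Each factor equals $(1-2t)^{-1/2}$ by the single-variable Fact that gives $\mathbb{E}(\exp(t X^2)) = 1/\sqrt{1-2t}$ for $X \sim N(0,1)$ and $t < 1/2$. Multiplying the $d$ identical factors yields $(1-2t)^{-d/2}$, and the domain $t < 1/2$ is inherited unchanged from the single-variable statement, since finiteness of each factor is exactly the condition $t < 1/2$. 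This establishes Equation (\ref{eq:chi2:mgf}) for the unscaled case; for $X \sim a\chi^2_d$ the same argument applied to $aW$ gives $M_{aW}(t) = M_W(at) = (1-2at)^{-d/2}$, valid for $t < 1/(2a)$, which specializes to the displayed formula when $a = 1$.

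For the mean and variance of $X \sim a\chi^2_d$ I would either differentiate the scaled MGF at the origin, reading off $\mathbb{E}(X) = M_X'(0) = da$ and $\mathrm{Var}(X) = M_X''(0) - (M_X'(0))^2 = 2da^2$, or, more transparently, compute the per-coordinate moments $\mathbb{E}(Z_i^2) = 1$ and $\mathbb{E}(Z_i^4) = 3$ (so $\mathrm{Var}(Z_i^2) = 2$), then use linearity of expectation and additivity of variance over the independent summands to obtain $\mathbb{E}(W) = d$ and $\mathrm{Var}(W) = 2d$, and finally rescale by $a$ (the mean picks up a factor $a$, the variance a factor $a^2$).

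The steps here are all routine, so there is no genuine obstacle; the only point deserving care is the bookkeeping around the scaling constant $a$ and the domain of $t$. In particular, the displayed formula $(1-2t)^{-d/2}$ is the MGF of the unscaled $\chi^2_d$ (consistent with the Fact's title), whereas the mean and variance are stated for the scaled variable $a\chi^2_d$; I would keep the two halves of the statement explicitly separate so they are not conflated, noting that the scaled MGF is $(1-2at)^{-d/2}$ on $t < 1/(2a)$.
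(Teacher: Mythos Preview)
Your proposal is correct and follows exactly the route the paper indicates: the sentence preceding this Fact says ``The above facts can be used to compute the moment-generating function of the $\chi^2$ distribution,'' and you do precisely that---represent $\chi^2_d$ via Cochran's theorem as a sum of squared i.i.d.\ standard normals, factor by independence, and apply the single-variable formula $\mathbb{E}(\exp(tX^2)) = (1-2t)^{-1/2}$. Your observation that the displayed MGF is for the unscaled $\chi^2_d$ while the mean and variance are stated for $a\chi^2_d$ is also correct and worth keeping explicit.
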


\begin{fact}[Markov's Inequality]
Let $X$ be a nonnegative random variable. Then for any $a > 0$,
\begin{equation}\label{eq:markov}
    P(X \geq a) \leq \frac{\mathbb{E}(X)}{a}
\end{equation}
\end{fact}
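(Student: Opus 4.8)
The plan is to derive the inequality directly from the definition of expectation by way of a pointwise indicator bound, which is the cleanest standard route. First I would fix $a > 0$ and introduce the indicator random variable $\mathbf{1}_{\{X \geq a\}}$, equal to $1$ on the event $\{X \geq a\}$ and $0$ otherwise. The crucial observation is the deterministic inequality $a\,\mathbf{1}_{\{X \geq a\}} \leq X$, which holds pointwise: on the event $\{X \geq a\}$ the left-hand side equals $a$ while the right-hand side is at least $a$, and on the complementary event the left-hand side is $0$ while the right-hand side is nonnegative precisely because $X \geq 0$ by hypothesis. This is the only place the nonnegativity assumption is used, and I would flag it as the conceptual core of the argument.

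Next I would apply the expectation operator to both sides and invoke its monotonicity and linearity: $\mathbb{E}\left[ a\,\mathbf{1}_{\{X \geq a\}} \right] \leq \mathbb{E}[X]$, hence $a\,\mathbb{E}\left[ \mathbf{1}_{\{X \geq a\}} \right] \leq \mathbb{E}[X]$. Since the expectation of an indicator equals the probability of its event, $\mathbb{E}\left[ \mathbf{1}_{\{X \geq a\}} \right] = P(X \geq a)$, this reads $a\,P(X \geq a) \leq \mathbb{E}[X]$. Dividing through by $a > 0$ yields $P(X \geq a) \leq \mathbb{E}[X]/a$, the claimed bound.

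There is no substantive obstacle here; the result is elementary once the pointwise bound is in place, and the only bookkeeping subtlety is finiteness of $\mathbb{E}[X]$ — if $\mathbb{E}[X] = \infty$ the inequality is trivially true, so the conclusion is unconditional. The same chain can be repackaged through the tail-integral identity $\mathbb{E}[X] = \int_0^\infty P(X > t)\,dt$, from which $\mathbb{E}[X] \geq \int_0^a P(X > t)\,dt \geq a\,P(X \geq a)$ follows as well; I would nonetheless prefer the indicator version, since it is shorter and avoids any measure-theoretic apparatus not otherwise needed in the paper, and it is exactly the form reused when these facts feed into the moment-generating-function bounds (Equation~(\ref{eq:chi2:mgf})) underlying Lemma~\ref{lem:chibound}.
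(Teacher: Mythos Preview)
Your proof is correct and is the standard indicator-function argument for Markov's inequality. The paper itself does not prove this statement: it is listed in the Preliminaries (Section~\ref{sec:preliminaries}) as a standard textbook fact without proof, so there is no paper-side argument to compare against.
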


\begin{fact}[Cauchy-Schwarz Inequality]
Let $X$ and $Y$ be random variables. Then,
\begin{equation}\label{eq:cs}
    \mathbb{E}(XY)^2 \leq \mathbb{E}(X^2) \mathbb{E}(Y^2) 
\end{equation}
\end{fact}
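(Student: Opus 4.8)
The plan is to establish $(\mathbb{E}(XY))^2 \leq \mathbb{E}(X^2)\,\mathbb{E}(Y^2)$ by the classical ``nonnegative quadratic'' argument, relying only on linearity of expectation and the pointwise nonnegativity of a square. First I would dispose of the degenerate case: if $\mathbb{E}(Y^2) = 0$ then $Y = 0$ almost surely, so both $\mathbb{E}(XY)$ and the right-hand side vanish and the inequality holds trivially (the symmetric case $\mathbb{E}(X^2) = 0$ is identical). Throughout I assume the second moments are finite, which also guarantees $\mathbb{E}(XY)$ is well defined, e.g. via the pointwise bound $|XY| \leq \tfrac12(X^2 + Y^2)$.

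In the main case $\mathbb{E}(Y^2) > 0$, I would introduce the auxiliary random variable $X - tY$ for a real parameter $t$ and use $\mathbb{E}\big((X - tY)^2\big) \geq 0$ for every $t \in \mathbb{R}$. Expanding by linearity yields the quadratic
\begin{equation}
\mathbb{E}(X^2) - 2t\,\mathbb{E}(XY) + t^2\,\mathbb{E}(Y^2) \geq 0 .
\end{equation}
Since this is a nonnegative quadratic in $t$ with positive leading coefficient $\mathbb{E}(Y^2)$, its discriminant cannot be positive:
\begin{equation}
\left(2\,\mathbb{E}(XY)\right)^2 - 4\,\mathbb{E}(Y^2)\,\mathbb{E}(X^2) \leq 0 ,
\end{equation}
which rearranges directly to the claimed bound. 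Equivalently, one can substitute the minimizer $t^{\ast} = \mathbb{E}(XY)/\mathbb{E}(Y^2)$ into the quadratic and read off the inequality.

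There is no genuine obstacle here, as this is a standard textbook fact; the only points requiring care are the bookkeeping of which quantity is squared — note the left-hand side is $(\mathbb{E}(XY))^2$, not $\mathbb{E}((XY)^2)$ — and the separate treatment of the degenerate zero-variance case so that the division defining $t^{\ast}$ is valid. An alternative one-line route that avoids the parameter $t$ is to normalize $\tilde{X} = X/\sqrt{\mathbb{E}(X^2)}$ and $\tilde{Y} = Y/\sqrt{\mathbb{E}(Y^2)}$, apply $|\tilde{X}\tilde{Y}| \leq \tfrac12(\tilde{X}^2 + \tilde{Y}^2)$ pointwise, and take expectations to obtain $|\mathbb{E}(\tilde{X}\tilde{Y})| \leq 1$. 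I would keep the quadratic argument as the primary proof, since it is the most self-contained given the facts already listed.
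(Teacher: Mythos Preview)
Your proof is correct and is the standard discriminant argument for Cauchy--Schwarz; the handling of the degenerate case and the remark on integrability of $XY$ via $|XY|\le\tfrac12(X^2+Y^2)$ are both appropriate. Note, however, that the paper does not actually prove this statement: it is listed among the ``standard textbook facts in statistics'' in the preliminaries and is simply quoted without argument, so there is no paper proof to compare against. Your write-up would serve perfectly well as a self-contained justification if one were desired.
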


\begin{fact}[Chebyshev's Inequality]
Let $X$ be an integrable random with finite variance $\sigma^2>0$ and a finite mean. Then for any $a > 0$,
\begin{equation}\label{fact:chebyshev}
     P(|X - \mathbb{E}(X)| \geq a \sigma ) \leq \frac{1}{a^2}
\end{equation}
\end{fact}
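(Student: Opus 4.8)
The plan is to derive Chebyshev's inequality as an immediate consequence of Markov's inequality \eqref{eq:markov}, which is already available. The key idea is that the event to be bounded can be rewritten in terms of a nonnegative random variable, making it eligible for Markov's inequality. First I would set $\mu = \mathbb{E}(X)$ and introduce the auxiliary random variable $Y = (X - \mu)^2$. Because $Y$ is a square it is nonnegative, so Markov's inequality applies to it.

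The next step is to observe that the two events coincide: since both $|X - \mu|$ and $a\sigma$ are nonnegative and squaring is monotone on $[0,\infty)$, we have $|X - \mu| \geq a\sigma$ if and only if $(X-\mu)^2 \geq a^2 \sigma^2$. Hence $P\left(|X - \mu| \geq a\sigma\right) = P\left(Y \geq a^2\sigma^2\right)$. Applying Markov's inequality to $Y$ with the threshold $a^2\sigma^2 > 0$ then yields $P\left(Y \geq a^2\sigma^2\right) \leq \mathbb{E}(Y)/(a^2\sigma^2)$.

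The final step is to identify the numerator: $\mathbb{E}(Y) = \mathbb{E}\left((X-\mu)^2\right) = \mathrm{Var}(X) = \sigma^2$, which is finite by hypothesis. Substituting gives $\sigma^2/(a^2\sigma^2) = 1/a^2$, the claimed bound. There is no substantial obstacle here; the only points requiring a moment's care are the hypotheses that make the statement well-posed — namely $\sigma^2 > 0$, so that $a\sigma$ is a genuine positive threshold and $a^2\sigma^2 > 0$ as required by Markov, and the finiteness of the mean and variance, so that $\mathbb{E}(Y)$ exists and equals $\sigma^2$. Both are granted in the assumptions, so the proof reduces to the chain of equalities and the single application of \eqref{eq:markov}.
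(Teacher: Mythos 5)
Your proof is correct and is the standard derivation: the paper states Chebyshev's inequality as a textbook fact without proof, and your argument---applying Markov's inequality \eqref{eq:markov} to the nonnegative variable $(X-\mathbb{E}(X))^2$ with threshold $a^2\sigma^2$---is exactly the classical route the paper implicitly relies on. Your attention to the hypotheses ($\sigma^2>0$ and finiteness of the mean and variance) is appropriate and complete.
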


\begin{fact}[Hoeffding's Inequality]\label{fact:hoeffding}
    Let $X_1, X_2, \dots, X_n$ be i.i.d. random variables with $a_i \leq X_i \leq b_i$ almost surely. Let $S = \sum_{i=1}^n X_i$.
    Then for any $t>0$,
    \begin{equation}\label{eq:hoeffding}
        P(|S-\mathbb{E}(S)| \geq t) \leq 2 \exp\left( -\frac{2 t^2}{\sum_{i=1}^n (b_i-a_i)^2} \right)
    \end{equation}
\end{fact}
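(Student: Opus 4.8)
The plan is to prove the two-sided statement by first establishing the one-sided tail bound $\Pr(S - \mathbb{E}(S) \ge t) \le \exp\!\left(-2t^2 / \sum_i (b_i-a_i)^2\right)$ and then combining it with the analogous bound for the lower tail via a union bound, which produces the factor of $2$. First I would center the variables by setting $Y_i = X_i - \mathbb{E}(X_i)$, so that $\mathbb{E}(Y_i) = 0$ and each $Y_i$ lies in an interval of the same width $b_i - a_i$; this reduces the task to controlling $\Pr(\sum_i Y_i \ge t)$.

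The engine of the argument is the exponential (Chernoff) method. For any $s > 0$, using that $e^{s\,\cdot}$ is increasing and that the $Y_i$ are independent, I would write
\[
\Pr\!\left( \sum_i Y_i \ge t \right) = \Pr\!\left( e^{s \sum_i Y_i} \ge e^{st} \right) \le e^{-st}\, \mathbb{E}\!\left( e^{s \sum_i Y_i} \right) = e^{-st} \prod_{i=1}^n \mathbb{E}\!\left( e^{s Y_i} \right),
\]
where the inequality is Markov's inequality (Equation (\ref{eq:markov})) applied to the nonnegative variable $e^{s\sum_i Y_i}$, and the final factorization uses independence.

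The main obstacle, and the only genuinely nontrivial step, is \textbf{Hoeffding's Lemma}: for a zero-mean random variable $Y$ supported in $[c,d]$ one has $\mathbb{E}(e^{sY}) \le \exp\!\big(s^2 (d-c)^2/8\big)$. I would prove this by studying the cumulant generating function $\psi(s) = \log \mathbb{E}(e^{sY})$. A direct computation gives $\psi(0) = 0$ and $\psi'(0) = \mathbb{E}(Y) = 0$, while $\psi''(s)$ equals the variance of $Y$ under the exponentially tilted law with density proportional to $e^{sy}$. Since that tilted law is still supported in $[c,d]$, its variance is at most $(d-c)^2/4$ — the maximal variance of any random variable confined to an interval of length $d-c$, attained by the two-point distribution at the endpoints. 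Applying Taylor's theorem with remainder then yields $\psi(s) \le s^2 (d-c)^2/8$, and taking $d-c = b_i - a_i$ for each centered variable completes the lemma.

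Substituting Hoeffding's Lemma into the product bound gives
\[
\Pr\!\left( \sum_i Y_i \ge t \right) \le \exp\!\left( -st + \tfrac{s^2}{8} \sum_{i=1}^n (b_i - a_i)^2 \right).
\]
The right-hand side is a convex quadratic in $s$, minimized at $s^\ast = 4t / \sum_i (b_i - a_i)^2 > 0$; substituting $s^\ast$ collapses the bound to $\exp\!\big(-2t^2 / \sum_i (b_i - a_i)^2\big)$. Finally I would run the identical argument on the variables $-X_i$, which lie in $[-b_i, -a_i]$ of the same width, to bound the lower tail $\Pr(S - \mathbb{E}(S) \le -t)$, and add the two tail probabilities to arrive at the stated factor of $2$.
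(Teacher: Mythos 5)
Your proposal is correct and complete: the Chernoff exponential-moment argument, Hoeffding's lemma proved through the cumulant generating function $\psi(s)=\log\mathbb{E}(e^{sY})$ with $\psi''$ bounded by the maximal variance $(d-c)^2/4$ of a distribution supported on an interval of length $d-c$, the optimization at $s^\ast = 4t/\sum_i (b_i-a_i)^2$, and the union bound over the two tails together constitute the canonical textbook proof, and every step checks out (in particular your quadratic minimization does yield the exponent $-2t^2/\sum_i(b_i-a_i)^2$, and boundedness of $Y$ justifies the differentiation under the expectation that your tilting argument needs). The paper itself states this result as a Fact in its ``Standard Results'' preliminaries and supplies no proof at all, so there is no in-paper argument to compare against; your write-up simply fills in the standard derivation the paper implicitly invokes. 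One small observation: the paper's hypothesis reads ``i.i.d.'' while simultaneously allowing index-dependent bounds $[a_i,b_i]$, which is slightly inconsistent; your proof uses only independence, which is the correct level of generality for the stated bound and quietly repairs that wrinkle.
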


\begin{fact}[Counting Partitions]\label{fact:binomial}
There are $2^n$ ways to partition $n$ elements into $2$ identified sets.
The fraction of partitions with exactly $S$ elements in the first set (and $n-S$ in the other) is $\binom{n}{S}/2^n$, which coincides with the binomial distribution with $p=1/2$. 
The binomial distribution with $p=1/2$ has a mean of $n/2$ and a variance of $n/4$. 
\end{fact}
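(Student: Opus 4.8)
The plan is to establish the three assertions of the fact separately, each by elementary combinatorics together with the standard moment formulas for the binomial distribution. The unifying observation is that assigning each of the $n$ elements independently to one of two \emph{labeled} sets is the same experiment as $n$ independent fair coin flips, so the distribution induced on the size of the first set is exactly $\mathrm{Bin}(n,1/2)$; the three claims are then just three readings of this single experiment.

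First I would count the total number of partitions. Because the two sets are identified (labeled), a partition is precisely an assignment of each element to set $1$ or set $2$; each of the $n$ elements has two choices, so by the multiplication principle there are $2^n$ such assignments, a count that includes the two partitions leaving one set empty. Next, to count partitions with exactly $S$ elements in the first set, I would note that such a partition is determined by choosing which $S$ of the $n$ elements belong to set $1$, the remaining $n-S$ being forced into set $2$; there are $\binom{n}{S}$ such choices, so the fraction is $\binom{n}{S}/2^{n}$. I would then identify this with the probability mass function of $\mathrm{Bin}(n,1/2)$ via $\binom{n}{S}(1/2)^{S}(1/2)^{n-S}=\binom{n}{S}/2^{n}$.

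Finally, for the moments I would represent a $\mathrm{Bin}(n,1/2)$ variable as a sum $\sum_{i=1}^{n} X_i$ of i.i.d.\ Bernoulli$(1/2)$ indicators, where $X_i$ records whether element $i$ lands in the first set. Each indicator has mean $1/2$ and variance $\tfrac12\cdot\tfrac12=\tfrac14$, so linearity of expectation gives mean $n/2$ and independence gives variance $n/4$. There is essentially no obstacle here, as these are standard textbook facts; the only point requiring care is the modeling convention that the two sets are labeled and may be empty, which is exactly what makes the total count $2^{n}$ rather than a Stirling-type count of unordered partitions.
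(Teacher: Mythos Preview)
Your proposal is correct. The paper itself does not prove this statement: it is listed among the ``standard textbook facts in statistics'' in the preliminaries and is simply asserted without argument. Your elementary derivation---counting labeled assignments, identifying the count with the $\mathrm{Bin}(n,1/2)$ mass function, and computing moments via a sum of i.i.d.\ Bernoulli indicators---is exactly the standard justification the paper implicitly relies on, and your remark about the labeling convention (which is what distinguishes the $2^n$ count from an unordered-partition count) is the only subtlety worth flagging.
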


\begin{fact}[Counting Typical Partitions]\label{fact:hoeffding:binomial}
    The fractions of the partitions of $n$ into $2$ identified sets with exactly $S$ elements in the first set (and $n-S$ in the other) is bounded by Hoeffding's inequality (Fact \ref{fact:hoeffding}) applied to the binomial distribution:
    \begin{equation}\label{eq:hoeffding:binomial}
        P(|S- n/2 | \geq q \sqrt{n}/2 ) \leq 2 \exp\left( -\frac{q^2}{2} \right)
    \end{equation}
    {\em More informally, for a large $n$, almost all the partitions of $n$ into $2$ identified sets have about $n/2$ elements in each set.}
\end{fact}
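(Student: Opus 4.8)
The plan is to recognize the combinatorial quantity as a binomial random variable and then invoke Hoeffding's inequality directly; indeed, the statement itself signals this route by saying the bound comes from ``Hoeffding's inequality applied to the binomial distribution.'' By Fact \ref{fact:binomial}, the fraction of partitions of the $n$ elements into two identified sets with exactly $S$ elements in the first set equals $\binom{n}{S}/2^n$, which is precisely the probability mass function of a $\mathrm{Binomial}(n,1/2)$ random variable. Hence it suffices to control the deviation of such a variable from its mean.

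First I would write $S = \sum_{i=1}^n X_i$, where the $X_i$ are i.i.d. Bernoulli$(1/2)$ indicators, with $X_i$ recording whether element $i$ is placed in the first set. These satisfy $0 \le X_i \le 1$ almost surely, so in the hypotheses of Hoeffding's inequality (Fact \ref{fact:hoeffding}) we may take $a_i = 0$ and $b_i = 1$, giving $\sum_{i=1}^n (b_i - a_i)^2 = n$. The mean is $\mathbb{E}(S) = n/2$.

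Next I would apply Fact \ref{fact:hoeffding} with the threshold $t = q\sqrt{n}/2$, which yields
\begin{equation}
    P\!\left(|S - n/2| \ge q\sqrt{n}/2\right) \le 2\exp\!\left(-\frac{2 t^2}{n}\right) = 2\exp\!\left(-\frac{2 \cdot q^2 n/4}{n}\right) = 2\exp\!\left(-\frac{q^2}{2}\right),
\end{equation}
which is exactly the claimed bound in Equation (\ref{eq:hoeffding:binomial}).

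There is no substantial obstacle here; the only care needed is the bookkeeping in the exponent, namely checking that the substitution $t = q\sqrt{n}/2$ produces $t^2 = q^2 n/4$ so that the factor $n$ in the denominator cancels and leaves $q^2/2$. The informal consequence — that for large $n$ almost all partitions are nearly balanced — then follows immediately, since the right-hand side decays in $q$ and is independent of $n$, so the measure of partitions deviating from $n/2$ by more than $q\sqrt{n}/2$ can be made arbitrarily small by taking $q$ large.
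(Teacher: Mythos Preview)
Your proposal is correct and matches the paper's intended route exactly: the paper states this as a standard fact without giving a proof, merely indicating that it is Hoeffding's inequality (Fact~\ref{fact:hoeffding}) applied to the binomial distribution, and your write-up fills in precisely those details with the correct bookkeeping in the exponent.
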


\begin{fact}[Union Bound]\label{fact:union}
    Let $A_1, A_2, \dots, A_n$ be events. Then,
    \begin{equation}\label{eq:union}
        P\left( \bigcup_{i=1}^n A_i \right) \leq \sum_{i=1}^n P(A_i)
    \end{equation}
\end{fact}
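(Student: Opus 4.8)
The plan is to reduce the union to a \emph{disjoint} union and then invoke finite additivity together with the monotonicity of $P$. The key obstruction to applying additivity directly is that the events $A_1,\dots,A_n$ may overlap; the standard remedy is to carve out the overlaps so that what remains is pairwise disjoint while still covering the same total event.

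Concretely, I would introduce the telescoped events $B_1 = A_1$ and, for $k \geq 2$, $B_k = A_k \setminus \bigcup_{i=1}^{k-1} A_i$. Two elementary set-theoretic facts then need to be checked: (i) the $B_k$ are pairwise disjoint, since for $j < k$ every point of $B_k$ lies outside $A_j \supseteq B_j$; and (ii) $\bigcup_{k=1}^n B_k = \bigcup_{k=1}^n A_k$, which holds because each $B_k \subseteq A_k$ gives one inclusion, and any point in $\bigcup_k A_k$ belongs to $B_k$ for the smallest index $k$ at which it first appears, giving the reverse inclusion. With disjointness established, finite additivity of the probability measure yields $P\left( \bigcup_{k=1}^n A_k \right) = P\left( \bigcup_{k=1}^n B_k \right) = \sum_{k=1}^n P(B_k)$.

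The final step uses monotonicity: since $B_k \subseteq A_k$, writing $A_k$ as the disjoint union $B_k \cup (A_k \setminus B_k)$ and using non-negativity of probability gives $P(A_k) = P(B_k) + P(A_k \setminus B_k) \geq P(B_k)$. Summing over $k$ and combining with the additivity identity above delivers $P\left( \bigcup_{k=1}^n A_k \right) = \sum_{k=1}^n P(B_k) \leq \sum_{k=1}^n P(A_k)$, which is the claimed bound in Equation (\ref{eq:union}).

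There is no genuine analytic difficulty here; the only point requiring care is bookkeeping in the disjointification, namely verifying (i) and (ii) cleanly so that finite additivity applies without double counting. An entirely equivalent route is induction on $n$: the base case $n = 1$ is the identity $P(A_1) = P(A_1)$, and the inductive step rests on the two-set inclusion--exclusion identity $P(U \cup A_{n+1}) = P(U) + P(A_{n+1}) - P(U \cap A_{n+1}) \leq P(U) + P(A_{n+1})$, where $U = \bigcup_{k=1}^n A_k$ and the inequality again follows from non-negativity of $P(U \cap A_{n+1})$. I would present the disjointification argument as the primary proof since it exposes the mechanism most transparently, and note the inductive version as the standard alternative.
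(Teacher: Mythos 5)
Your proof is correct. Note, however, that the paper states the union bound as a standard textbook fact (Fact \ref{fact:union} in its ``Standard Results'' section) and offers no proof at all, so there is no in-paper argument to compare against; your disjointification via $B_k = A_k \setminus \bigcup_{i=1}^{k-1} A_i$, with finite additivity and monotonicity, is the canonical derivation, and your inductive alternative via two-set inclusion--exclusion is equally standard. Both routes are sound and either would serve as a complete proof of Equation (\ref{eq:union}).
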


\begin{definition}[First-Order Stochastic Dominance (FSD)]\label{def:FSD}
    Let $X$ and $Y$ be two random variables. We say that $X$ first-order stochastically dominates $Y$, denoted as $X \succeq_{FSD} Y$, if for all $x \in \mathbb{R}$,
    \begin{equation}\label{eq:FSD}
        P(X \geq x) \geq P(Y \geq x).
    \end{equation}
    In other words, the cumulative distribution function (CDF) of $X$ is always smaller than or equal to the CDF of $Y$ at all points: $F_X(x) \leq F_Y(x)$.
    This implies that $X$ is more likely to have larger values than $Y$.
\end{definition}

\begin{definition}[Wilson's Interval for Confidence Interval of Binomial Proportions]\label{def:Wilson_interval}
    The error bars for estimates of proportions in this paper are computed using Wilson's interval. We choose this method for computing confidence intervals as it is robust to cases where the predicted proportion is close to 1 or 0, a case where other methods for computing the confidence interval give a zero-width interval regardless of the number of samples. The confidence interval is defined as:
    \begin{align}
        CI &= (\mathrm{center} - \mathrm{width}, \mathrm{center}+\mathrm{width}) \\
        \mathrm{center} &= \frac{n_s + \frac{1}{2}z_\alpha^2}{n + z_\alpha^2} \label{results:eq:wilson_interval}\\
        \mathrm{width} &= \frac{z_\alpha}{n + z_\alpha^2} \sqrt{\frac{n_s n_f}{n} +  \frac{z_\alpha^2}{4}}~,
    \end{align}
    where $n$ is the number of experiments, with $n_s$ and $n_f$ being the number of successes and failures, respectively. The value $z_\alpha$ is the $1 - \frac{\alpha}{2}$ for a standard normal distribution. In plots that use Wilson's interval (Figures \ref{fig:main_theorem_results}, \ref{fig:typical_partition}, \ref{fig:warmup_results}, and \ref{fig:union_bound_results}) we plot the actual estimated ratio $n_s/n$, and omit Wilson's center.
\end{definition}

\subsection{Normalized Mutual Information}\label{sec:preliminaries:nmi}

Mutual information quantifies the dependence of two random variables. In our context, we apply it to discrete random variables, although a definition for continuous random variables also exists. Let $X, Y$ be two discrete random variables with a joint probability density function $P_{(X, Y)}$. For the discrete case, the mutual information is defined as:
\begin{equation}
    I(X; Y) = \sum_{x \in \mathcal{Y}} \sum_{x\in \mathcal{X}} P_{(X, Y)}(x, y) \log\left(\frac{P_{(X, Y)}(x,y)}{P_X(x) P_Y(y)}\right)~,
    \label{eq:nmi_definition}
\end{equation}
where $P_X$ and $P_Y$ are the marginal probability density functions of $X$ and $Y$, respectively.

To compare the partitions obtained through k-means to the true partitions, we use the Normalized Mutual Information (NMI). We calculate the NMI using the implementation provided by scikit-learn and apply it to the results shown in Figure \ref{fig:experiments:nmi} (see Section \ref{sec:numerical_results:kmeans_vs_pca}). Scikit-learn's implementation normalizes the Mutual Information (Equation (\ref{eq:nmi_definition})) so that its value ranges between 0 and 1, where 1 indicates perfect correlation, and 0 indicates no dependence.

\section{Proofs}\label{sec:proofs}

This section contains the proofs of the theorems, lemmas, and corollaries presented in the main text.
For convenience, we restate the theorems, lemmas, and corollaries before their proofs. This results in some redundancy in text and numbering - some equation numbers might seem to be out of sequence -
but it may make the proofs easier to follow.

\subsection{Proof of Lemma \ref{lem:chibound}}\label{sec:proof:lem:general:bound}

We restate Lemma \ref{lem:chibound} and provide a proof. 

\lemchibound*

\begin{proof}

    \begin{equation}
        \begin{split}
            Pr&\left( Y_1 - Y_2 -m \leq 0 \right)\\
            &=Pr\left( -(Y_1 - Y_2 -m) \geq 0 \right) \\
              &=Pr\left( \exp\left(-t (Y_1 - Y_2 -m)\right) \geq 1 \right) ~~\text{for all } t>0 .
        \end{split}
    \end{equation}
    Using Markov's inequality (Equation (\ref{eq:markov})):
    \begin{equation}
        \begin{split}
            \leq & \mathbb{E}\left( \exp\left(-t (Y_1 - Y_2 -m)\right) \right) \\
            =    & \exp\left( tm \right) \mathbb{E}\left( \exp\left(-t Y_1  \right) \exp\left( t Y_2 \right) \right)   
        \end{split}
    \end{equation}
    Using the Cauchy-Schwarz inequality (Equation (\ref{eq:cs})):
    \begin{equation}
        \begin{split}
            \leq & \exp\left( tm \right) \sqrt{\mathbb{E}\left( \exp\left(-2t Y_1  \right) \right) \mathbb{E}\left( \exp\left( 2t Y_2 \right) \right)}   
        \end{split}
    \end{equation}
    and using Equation (\ref{eq:chibound:Y1Y2}), 
    \begin{equation}
        \begin{split}
            \leq & \exp\left( tm \right) \sqrt{\mathbb{E}\left( \exp\left(-2t b_1 Z_1  \right) \right) \mathbb{E}\left( \exp\left( 2t b_2 Z_2 \right) \right)}   
        \end{split}
    \end{equation}
    Next, using Equation (\ref{eq:chi2:mgf}), if we further assume $-2t b_1 < 1/2$ and $2t b_2 < 1/2$ (which we will show are satisfied at the optimal $t$), we have:
    \begin{equation}\label{eq:chibound:pf:bound}
        \begin{split}
            = & \exp\left( tm \right) \sqrt{ \left( 1 + 4 t b_1 \right)^{-d/2} \left( 1 - 4 t b_2 \right)^{-d/2} }  \\
            = & \exp\left( tm \right) \left( \left( 1 + 4 t b_1 \right) \left( 1 - 4 t b_2 \right) \right)^{-d/4} 
        \end{split}
    \end{equation}

    The expression $\left(4 b_1 t+1\right) \left(1-4 b_2 t\right)$ is maximized at
    \begin{equation} \label{eq:chibound:pf:tmax}
        t_{\text{max}}=\frac{b_1-b_2}{8 b_1 b_2}.
    \end{equation}

    Using simple algebra for the expression at $t_{\text{max}}$, we have,
    \begin{equation}\label{eq:chibound:pf:inner}
        \left( \left( 1 + 4 t_{\text{max}} b_1 \right) \left( 1 - 4 t_{\text{max}} b_2 \right) \right) = \frac{\left(b_1+b_2\right){}^2}{4 b_1 b_2}.
    \end{equation}

    Substituting Equations (\ref{eq:chibound:pf:tmax}) and (\ref{eq:chibound:pf:inner}) into Equation (\ref{eq:chibound:pf:bound}), we obtain the desired result.

    It remains to check that the assumptions for Equation (\ref{eq:chibound:pf:bound}) are satisfied at $t_{\text{max}}$.
    The first assumption $-2t b_1 < 1/2$ holds because $b_1 > b_2 >0$:
    \begin{equation}
        -2t_{\text{max}} b_1 = -\frac{b_1-b_2}{4 b_2} = \frac{b_2-b_1}{4 b_2} = \frac{1}{4} - \frac{b_1}{4 b_2} < 1/2.
    \end{equation}   

    The second assumption $2t b_2 < 1/2$ holds because $b_1 > b_2 >0$:
    \begin{equation}
        2t_{\text{max}} b_2 = \frac{b_1-b_2}{4 b_1}  < \frac{b_1}{4 b_1} < 1/2.
    \end{equation}

\end{proof}

\begin{remark}[Intuition]
    Under the assumptions of Lemma \ref{lem:chibound}, we have the variables $\tilde{Z}_1 \sim b_1 \chi^2_d$ and $\tilde{Z}_2 \sim b_2 \chi^2_d$.
    The expected value of $\tilde{Z}_1$ is $\mathbb{E}\left(\tilde{Z}_1\right) = d \cdot b_1$ and the expected value of $\tilde{Z}_2$ is $\mathbb{E}\left(\tilde{Z}_2\right) =  d \cdot b_2$. Since $b_1 > b_2$, the center of $\tilde{Z}_1$  is greater than that of $\tilde{Z}_2$. As $d$ increases, the distributions are more concentrated around their means, and therefore we expect that the difference $\tilde{Z}_1 - \tilde{Z}_2$
    will be positive with a probability that increases with $d$. The Lemma states that the probability of a negative value decreases like $\left( \frac{\left(b_1+b_2\right){}^2}{4 b_1 b_2} \right)^{-d/4}$. 
    In other words, since $ \left( \frac{\left(b_1+b_2\right){}^2}{4 b_1 b_2} \right)^{-1} <1 $, we can obtain any desired small probability of $\tilde{Z}_1 - \tilde{Z}_1 \leq m$ simply by increasing $d$.
\end{remark}

\subsection{Proofs for Section \ref{sec:distances}}\label{appendix:distances}

This section discusses the distribution of distances between points and cluster centers in the k-means algorithm in the settings described in Section \ref{sec:settings}.
It provides proofs for lemmas \ref{lem:dist:caset:dist} and \ref{lem:dist:casec:dist} in the main text, and gives more intuition for the reason we call the distributions defined in these lemmas ``worst case.''

\subsubsection{The Distribution of Distance to the $T$ Cluster Center (the cluster to which the sample was {\em not} assigned)}

We first consider the distance between $x_j$ and the center of the $T$ cluster, of which the point $j$ is currently {\em not} a member.

The center $\mu_T$ is the average of all the points assigned to it.
\begin{equation}
\begin{split}
    \mu_T = & \frac{1}{s_T} \sum_{i \in S_T} x_i = \frac{1}{s_T} \sum_{i \in S_T} \left( \mu^{\text{True}}_{z^{\text{True}}_i} + \xi_i \right) = \\
          = & \sum_k \frac{ \left| \{ i \in S_t : z^{\text{True}}_i = k \} \right|}{s_T} \mu^{\text{True}}_{k} + \frac{1}{s_T} \sum_{i \in S_T } \xi_i .
    \end{split}
\end{equation}

The difference between the data point $x_j$ and this cluster center is:
\begin{equation}
    \begin{split}
    x_j - \mu_T = &\mu^{\text{True}}_{z^{\text{True}}_j} + \xi_j     - \frac{1}{s_T} \sum_{i \in S_T } \xi_i.\\
     - &\sum_k \frac{ \left| \{ i \in S_T : z^{\text{True}}_i = k \} \right|}{s_t} \mu^{\text{True}}_{k}
    \end{split}
\end{equation}

We will be interested in the distribution of the squared norm of this difference: $\left\| x_j - \mu_T \right\|^2$.

For our conservative bound on this distance, we consider the case where all the points in class $S_T$ are from the same true cluster as $x_j$, i.e.,
$\forall i \in S_T : z^{\text{True}}_i = z^{\text{True}}_j$. 
This is the ``worst case'' scenario in the sense that the distance between the point and $\mu_T$ is ``minimized''. 
More formally, if these are random variables, the cumulative probability of the distance would be the smallest in this case.
Under this assumption, the difference is:
\begin{equation}
    \begin{split}
     x_j - \mu_T &= \mu^{\text{True}}_{z^{\text{True}}_j} - \mu^{\text{True}}_{z^{\text{True}}_j} + \xi_j - \frac{1}{s_T} \sum_{i \in S_T } \xi_i\\
     &= \xi_j - \frac{1}{s_T} \sum_{i \in S_t } \xi_i.
    \end{split}
 \end{equation}

 Using Equation (\ref{eq:sum:gaussian}) on each dimension of the difference, we obtain the (worst case) distribution of the difference:
 \begin{equation}
    x_j - \mu_T \sim N\left( 0, \left( 1 + \frac{1}{s_T} \right) \sigma^2 I_d \right).
\end{equation}
And using Equation (\ref{eq:cochran}), we obtain the distribution of the squared norm of the difference as a scaled chi-squared random variable and Lemma \ref{lem:dist:caset:dist} which is restated below:

\lemdistcaset*

\begin{remark}
    We note that in this ``worst case'' scenario, the ``signal'' component of the difference, corresponding to the true cluster center disappears.
    However, importantly, there are two sources of ``noise'': the noise from the average of the points, with a variance of $\frac{s_T}{s_T^2} \sigma^2 = \frac{1}{s_T} \sigma^2$, 
    and the noise from the point $x_j$ itself, with a variance of $\sigma^2$; these components sum up to $\left( 1 + \frac{1}{s_T} \right) \sigma^2$ in each dimension.
\end{remark}

\begin{remark}
    Compared to the distance to the correct known cluster considered below in Equation (\ref{eq:wmup_l2_dist_correct}),
    we observe that the distance here (Equation (\ref{eq:dist:caset:dist})), which is the ``worst'' in the sense that it is the ``smallest'', is larger than the distance to the correct cluster by a factor of $1+\frac{1}{s_T}$. This factor is due to the noise from the average of the points in the cluster, and it decreases as the cluster size increases.
\end{remark}

\subsubsection{The Distribution of Distances to the $C$ Cluster Center (the cluster to which the sample is currently assigned)}

We now consider the distance between $x_j$ and the center of the $C$ cluster, of which the point $j$ is currently a member.

The difference between the point $x_j$ and this cluster center is:
{\small 
\begin{equation}
    \begin{split}
        x_j - \mu_C &= \mu^{\text{True}}_{z^{\text{True}}_j} + \xi_j - \frac{1}{s_C} \sum_{i \in S_C } \xi_i \\ &- \sum_k \frac{ \left| \{ i \in S_C : z^{\text{True}}_i = k \} \right|}{s_C} \mu^{\text{True}}_{k} 
    \end{split}
\end{equation}
}
However, since $x_j$ is a member of this cluster and participates in the average, we have:
\begin{equation}
    x_j - \mu_C = 
    {\scriptstyle \left( \begin{aligned}
    & \left( 1 - 1/s_C \right) \left( \mu^{\text{True}}_{z^{\text{True}}_j} + \xi_j \right) \\
    &- \sum_k \frac{ \left| \{ i \in S_C - j : z^{\text{True}}_i = k \} \right|}{s_C} \mu^{\text{True}}_{k} \\
    & - \frac{1}{s_C} \sum_{i \in S_C - j } \xi_i     
    \end{aligned}\right)},
\end{equation}
where $S_C - j$ denotes the set of points in $S_C$ excluding $x_j$.

This time, our ``worst case'' scenario is that all the points in class $S_C$ are from a cluster $q$ different from the true cluster $z_j$ of $x_j$, i.e., 
$\forall i \in S_C : z^{\text{True}}_i = q \neq z^{\text{True}}_j$.
This is the worst case in the sense that the cumulative probability of any distance would be the greatest in this case.

In this case, we have
{\small
\begin{equation}
    \begin{split}
    x_j - \mu_C &= \left( 1 - 1/s_C \right) \left( \mu^{\text{True}}_{z^{\text{True}}_j} + \xi_j \right) - \frac{ s_C -1 }{s_C} \mu^{\text{True}}_{q}\\
    &- \frac{1}{s_C} \sum_{i \in S_C - j } \xi_i.
    \end{split}
\end{equation}
}

Using the independence of the variables and Equation (\ref{eq:sum:gaussian}), we obtain the distribution of the difference:
\begin{equation}
    x_j - \mu_C \sim N \left( 0, \left( \sigma ^2 \left(1-\frac{1}{s_C}\right)+\frac{2 \tau ^2 \left(s_C-1\right)^2}{s_C^2} \right) I_d \right).
\end{equation}
Using Equation (\ref{eq:cochran}), we find that the distribution of the squared norm of the difference is a scaled chi-squared random variable and Lemma \ref{lem:dist:casec:dist} which is restated below:

\lemdistcasec*

\begin{remark}
    Compared to the ``worst case'' in the previous section, in this section there is an additional component corresponding to the distance between the cluster centers. 
    This component contains the signal variance $\tau^2$.

    Importantly, since the point $x_j$ is included in the average, the noise from the point $x_j$ itself is partially canceled out, yielding a minus sign where there is a plus sign in the $T$ case.
    This small difference becomes important when the noise is relatively large.
\end{remark}

\begin{remark}
    Compared to the distance to the wrong known cluster center considered below in Equation (\ref{eq:wmup_l2_dist_wrong}),
    we observe that the distance here in Equation (\ref{eq:dist:casec:dist}), which is the ``worst'' in the sense that it is the ``largest'' possible, is {\em smaller} by a factor of $1-\frac{1}{s_C}$ in the noise component and smaller by a factor of $\frac{\left(s_C-1\right)^2}{s_C^2}$ in the ``signal'' component.
    This is due to the fact that the point $x_j$ is included in the average: the noise and the signal from the point $x_j$ itself are partially canceled out.
    {\bf In other words, the current cluster center is biased toward the sample.}
\end{remark}

\subsection{Proofs of Fixed Points}

\thmmaindifftypical*

\begin{proof}
    W.L.O.G. we assume that $S_b$ is the current cluster of the chosen sample $j$, and $S_a$ is the other cluster,
    i.e. $j \in S_b=S_C$ and $S_a=S_T$.

    We recall from the proof of Theorem \ref{thm:maindiff} that $\rho$ is obtained by substituting $b_1=a_T$ (Equation (\ref{eq:dist:caset:dist})) and $b_2=a_C$ (Equation (\ref{eq:dist:casec:dist})) into Equation (\ref{eq:chibound}).

    By differentiating the expressions in equations (\ref{eq:maindiff:req0}) and (\ref{eq:maindiff:rho}) with respect to $s_C$ and $s_T$, we observe that both the threshold $\sigma$ and the bound $\rho$ are increasing with $s_C$ and $s_T$ in the valid range of Theorem \ref{thm:maindiff}.
    
    Substituting the maximum values of $s_C = n/2 + q \sqrt{n/4}$ and $s_T = n/2 + q \sqrt{n/4}$ into 
    equations (\ref{eq:maindiff:req0}) and (\ref{eq:maindiff:rho}) yields the expressions in the current Theorem.

    We note that slightly better expressions can be obtained by maximizing $\rho$ and $\sigma$ within the range of $s_C$ and $s_T$, and not maximizing the values of $s_C$ and $s_T$ separately.

    The conditions in Equations (\ref{eq:maindiff:typical:sizes}) and (\ref{eq:maindiff:typical:sigma}) ensure that the requirements of Lemma \ref{lem:chibound} are satisfied for any $|S_a|$ and $|S_b|$ in the range of the theorem.

\end{proof}

\cormaindifftypicalall*

\begin{proof}
    The subject of the corollary is a random dataset sampled as described in Theorem \ref{thm:maindiff:typical}.

    By Fact \ref{fact:hoeffding:binomial}, the assumption $q>\sqrt{2} \sqrt{-\log \left(\frac{\delta }{4}\right)}$ excludes at most a $(1-\delta)$ fraction of the partitions.
    These are partitions where the number of samples in each cluster is ``atypically'' far from $n/2$.
    We can now focus our attention on the ``typical'' partitions, which are the vast majority of the partitions.

    By the union bound (Equation (\ref{fact:union})), we have
    \begin{equation}
        Pr \left( \exists \text{Typical partition } P \text{ that is not a fixed point}  \right) \leq
        \sum_{P \in \text{typical partitions}} Pr \left( P \text{ is a fixed point}  \right)
    \end{equation}

    By fact \ref{fact:binomial} There are at most $2^n$ possible partitions of the samples (and therefore at most $2^n$ typical partitions):
    \begin{equation}
       \leq 2^n \min_{P \in \text{typical partitions}} Pr \left( P \text{ is a fixed point}  \right)
    \end{equation}
    Using an argument similar to that in Corollary \ref{cor:maindiff:typical:single}, this expression is bounded by $\varepsilon$.

    Therefore, for a randomly sampled dataset, the probability that all the typical partitions would be a fixed point of the k-means algorithm is at least $\varepsilon$.
    Recalling that we excluded at most a $(1-\delta)$ fraction of the partitions, we obtain Equation (\ref{eq:maindiff:typical:all}).

\end{proof}

\section{Additional Analysis}\label{sec:additional}

\subsection{Warmup: Assignment to the Correct Cluster when the Centers are Known}\label{sec:warmup}

A key component in our analysis is considering the probability that a sample switches from one cluster to another in a Lloyd k-means algorithm iteration (see Section \ref{sec:kmeans:formulation}). 
It is compelling to begin this discussion with the simple question of the probability that a sample is assigned to the correct cluster in the assignment case of the algorithm {\em if the correct cluster centers are known}. This seems to be a reasonable approximation of the problem in a relatively easy case, where the algorithm is close to the correct clustering and the problem is easy enough that the correct clustering is also the optimal clustering that minimizes the loss in Equation (\ref{eq:kmeans:loss0}).
This case also coincides in the limit of a very large number of samples $n$ with our ``worst case'' analysis, where we have the correct partition, except for a single sample (or a small number of samples) in the wrong cluster.
The analysis in the remainder of the paper shows that this intuition is misleading in our setting, but this analysis is a useful warm-up.

Let the two true cluster centers be i.i.d. $\mu_T, \mu_W  \in \mathbb{R}^d \sim N(0, \tau^2 I_d)$, and let the sample $x$ be $x = \mu_T + \xi$, where $\xi \sim N(0, \sigma^2 I_d)$ is independent of the centers. We will analyze the probability that the sample $x$ is closer to the ``wrong center'' $\mu_W$ than it is to the ``true center'' $\mu_T$ from which it was generated.
We begin by analyzing the distribution of distance from the sample to each cluster center.

The difference between the correct cluster center $\mu_T$ and the sample $x$ is:
\begin{equation}
    x - \mu_T = \xi \sim \mathcal{N}(0, \sigma^2 I_d)~.
\end{equation}
Using Equation (\ref{eq:cochran}), we obtain that the distribution of the squared norm of the difference is:
\begin{equation}
    \left\|x - \mu_T\right\|^2  \sim \sigma^2 \chi^2_d~.
    \label{eq:wmup_l2_dist_correct}
\end{equation} 

The difference between the wrong cluster center $\mu_W$ and the sample $x$ is:
\begin{equation}
    x - \mu_W = \mu_T + \xi - \mu_W = \xi + (\mu_T - \mu_W)~.
\end{equation}
Using the independence of the variables and Equation (\ref{eq:sum:gaussian}) we obtain the distribution of the difference:
\begin{equation}
    x - \mu_W\sim \mathcal{N}(0, (\sigma^2 + 2 \tau^2)I_d)~.
\end{equation}
Using Equation (\ref{eq:cochran}), the squared norm of the difference is:
\begin{equation}
    \left\|x - \mu_W\right\|^2  \sim (2\tau^2 + \sigma^2) \chi^2_d~.
    \label{eq:wmup_l2_dist_wrong}
\end{equation}

We now proceed to analyze the probability that the sample is assigned to the wrong cluster center.
\begin{theorem}\label{thm:wmup}
    Let $d > 0$ be the dimension of the samples and the true cluster centers. Let the true cluster centers be i.i.d. $\mu_T, \mu_W \in \Rd \sim N(0, \tau^2 I_d)$, and a sample $x$ be $x = \mu_T + \xi$, with $\xi \sim N(0, \sigma^2Id)$.
    
    Then, 
    \begin{equation}
        \Pr\left(\left\|x - \mu_W\right\|^2 - \left\|x - \mu_T\right\|^2 \leq 0 \right) \leq \rho^{d/4}\\
        \label{eq:wmup_theorem_prob}
    \end{equation}
    where
    \begin{align}
        \rho(\sigma, \tau) &= \frac{1 + \frac{2\tau^2}{\sigma^2}}{\left(1 + \frac{\tau^2}{\sigma^2}\right)^2}\\
    \end{align}
\end{theorem}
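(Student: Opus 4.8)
The plan is to reduce the statement directly to Lemma \ref{lem:chibound}, whose purpose is precisely to bound the lower tail of a difference of two (possibly dependent) scaled chi-squared variables. The two squared distances have already been identified as scaled chi-squared laws: Equation (\ref{eq:wmup_l2_dist_wrong}) gives $\left\|x-\mu_W\right\|^2 \sim (2\tau^2+\sigma^2)\chi^2_d$, and Equation (\ref{eq:wmup_l2_dist_correct}) gives $\left\|x-\mu_T\right\|^2 \sim \sigma^2 \chi^2_d$. I would therefore set $Y_1 = \left\|x-\mu_W\right\|^2$ and $Y_2 = \left\|x-\mu_T\right\|^2$, with $b_1 = 2\tau^2+\sigma^2$ and $b_2 = \sigma^2$; the first-order stochastic dominance hypotheses of Lemma \ref{lem:chibound} (see Definition \ref{def:FSD}) then hold, in fact with equality in distribution.

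First I would check the ordering requirement $b_1 > b_2 > 0$ demanded by the lemma: this is immediate, since $\tau>0$ forces $2\tau^2+\sigma^2 > \sigma^2 > 0$. I would then stress the one genuinely nontrivial structural point, namely why this warmup is not simply a convolution of two independent chi-squares: because $x = \mu_T + \xi$, both distances share the noise term $\xi$ (indeed $\left\|x-\mu_T\right\|^2 = \left\|\xi\right\|^2$ while $\left\|x-\mu_W\right\|^2 = \left\|\xi + (\mu_T-\mu_W)\right\|^2$), so $Y_1$ and $Y_2$ are dependent. This is exactly the dependence that Lemma \ref{lem:chibound} absorbs through its Cauchy-Schwarz step, so no independence assumption is needed.

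Next I would invoke Lemma \ref{lem:chibound} with $m=0$, which collapses the exponential prefactor to $1$ and yields
\begin{equation}
    \Pr\left( \left\|x-\mu_W\right\|^2 - \left\|x-\mu_T\right\|^2 \leq 0 \right) \leq \left( \frac{(b_1+b_2)^2}{4 b_1 b_2} \right)^{-d/4}.
\end{equation}
The remaining step is a short algebraic identity. Substituting $b_1 = 2\tau^2+\sigma^2$ and $b_2 = \sigma^2$ gives $b_1+b_2 = 2(\tau^2+\sigma^2)$, hence $\frac{(b_1+b_2)^2}{4 b_1 b_2} = \frac{(\sigma^2+\tau^2)^2}{(\sigma^2+2\tau^2)\sigma^2}$; dividing numerator and denominator by $\sigma^4$ rewrites this as $\frac{(1+\tau^2/\sigma^2)^2}{1+2\tau^2/\sigma^2}$, which is exactly $1/\rho$. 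Raising to the $-d/4$ power converts the bound into $\rho^{d/4}$, as claimed.

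I do not anticipate a serious obstacle: all the distributional computation was already performed in deriving Equations (\ref{eq:wmup_l2_dist_correct}) and (\ref{eq:wmup_l2_dist_wrong}), and the dependence between the two distances is handled wholesale by Lemma \ref{lem:chibound}. The only points requiring care are the bookkeeping in the reduction $\frac{(b_1+b_2)^2}{4 b_1 b_2} = 1/\rho$ and the sign of the exponent, so that taking the reciprocal correctly flips the negative power into the stated positive power of $\rho$.
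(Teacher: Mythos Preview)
Your proposal is correct and follows essentially the same approach as the paper: invoke Lemma \ref{lem:chibound} with $b_1 = 2\tau^2+\sigma^2$, $b_2 = \sigma^2$, $m=0$, verify $b_1>b_2>0$, and simplify $\frac{4b_1b_2}{(b_1+b_2)^2}$ to the stated $\rho$. Your additional remark about the dependence between $Y_1$ and $Y_2$ through the shared noise $\xi$ is a nice clarification, but otherwise the argument is identical.
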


\begin{proof}

We note that $\sigma^2 +2\tau^2 > \sigma^2$ for $\tau^2, \sigma^2 > 0$.
Substituting equations (\ref{eq:wmup_l2_dist_correct}) and (\ref{eq:wmup_l2_dist_wrong}) into Lemma \ref{lem:chibound} yields
\begin{equation}
    Pr(\|x - \mu_2\|^2 - \|x - \mu_1\|^2 \leq 0) \leq \rho^{d/4}, 
\end{equation}
where
\begin{align}
    \rho &= \frac{4 \sigma^2 (\sigma^2 + 2\tau^2)}{(\sigma^2 + \sigma^2 + 2 \tau^2)^2}\\
    &= \frac{4 \sigma^4 \left(1 + 2\dfrac{\tau^2}{\sigma^2} \right)}{4\sigma^4 \left(1 + \dfrac{\tau^2}{\sigma^2} \right)^2}\\
    &= \frac{\left(1 + 2\dfrac{\tau^2}{\sigma^2} \right)}{\left(1 + \dfrac{\tau^2}{\sigma^2} \right)^2}~·
\end{align}
\end{proof}

\begin{remark}
    This result coincides with Theorem \ref{thm:maindiff} and Corollary \ref{cor:special:tau1:sTeqsC} when the size of the cluster goes to infinity (in the ``worst case,'' and where only a negligible number of samples is in the wrong cluster).
\end{remark}

\begin{corollary}
    Let $d > \frac{4 \log(1/\varepsilon_0)}{\log(1/\rho)}$, with $\varepsilon_0$ being a small constant. Then, under the conditions of Theorem \ref{thm:wmup} we have:
    \begin{equation}
        Pr(\|x - \mu_W\|^2 - \|x - \mu_T\|^2 \leq 0) \leq \varepsilon_0. 
    \end{equation}
    Then the probability that the point is closer to the wrong center is less than $\varepsilon_0$.
\end{corollary}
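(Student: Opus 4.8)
The plan is to combine the probability bound from Theorem~\ref{thm:wmup} with the hypothesized lower bound on the dimension $d$; the corollary is essentially a restatement of the theorem once the threshold inequality is solved for $d$. First I would invoke Theorem~\ref{thm:wmup} directly to obtain
\begin{equation}
    \Pr\left(\left\|x - \mu_W\right\|^2 - \left\|x - \mu_T\right\|^2 \leq 0 \right) \leq \rho^{d/4},
\end{equation}
so that the task reduces to showing that the hypothesis $d > \frac{4\log(1/\varepsilon_0)}{\log(1/\rho)}$ forces $\rho^{d/4} \leq \varepsilon_0$.

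The key preliminary observation is that $0 < \rho < 1$: in Theorem~\ref{thm:wmup} we have $\sigma^2 + 2\tau^2 > \sigma^2$ for $\tau^2,\sigma^2 > 0$, so $\rho = (1 + 2\tau^2/\sigma^2)/(1 + \tau^2/\sigma^2)^2 < 1$. Consequently $\log(1/\rho) > 0$, which is what makes the stated threshold well defined and guarantees that multiplying or dividing by $\log(1/\rho)$ preserves the direction of the inequalities.

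Next I would carry out the elementary algebra. Multiplying the hypothesis $d > \frac{4\log(1/\varepsilon_0)}{\log(1/\rho)}$ through by the positive quantity $\tfrac{1}{4}\log(1/\rho)$ gives $\tfrac{d}{4}\log(1/\rho) > \log(1/\varepsilon_0)$, equivalently $\tfrac{d}{4}\log\rho < \log\varepsilon_0$; exponentiating, by monotonicity of $\exp$, yields $\rho^{d/4} < \varepsilon_0$. Chaining this with the bound from Theorem~\ref{thm:wmup} gives the claim.

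I expect no genuine obstacle here: the statement is a direct corollary obtained by inverting the $\rho^{d/4}$ decay rate, and the only point requiring any care is to record explicitly that $\rho \in (0,1)$, so that the logarithmic manipulations are valid and the dimension threshold on $d$ is meaningful.
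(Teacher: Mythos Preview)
Your proposal is correct and matches the paper's intended approach: the corollary is stated in the paper without an explicit proof, as an immediate consequence of Theorem~\ref{thm:wmup} obtained by inverting the $\rho^{d/4}$ bound, exactly as you do. The only subtlety you handle (and should) is verifying $0<\rho<1$ so that the logarithmic manipulation is valid.
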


This result indicates that if the true clusters are known, a sample is more likely to be closer to the correct cluster, with a probability that increases as the dimension increases. One might speculate that the same holds for k-means, that is, that the averaged cluster centers are close to the true centers and that a point would rarely be assigned to the wrong cluster. 
However, as Theorem \ref{thm:maindiff} shows, this intuition is misleading and misses important finite sample effects, which are particularly important in high-dimensional settings.

\subsection{Special case: $s_T = s_C$}\label{sec:specialcase:equalclusters}

In this section, we consider a special case of Theorem \ref{thm:maindiff}.
A typical partition of the samples will have roughly $s_T \approx s_C \approx n/2$, 
therefore, we consider here the special case where $s_T = s_C$. 
Our more careful analysis in the main text handles ``typical'' partitions more carefully and formally.
The choice $s_T = s_C$ simplifies the expressions and allows us to obtain a more succinct expression for $\rho$.
The choice $s_T = s_C$ also overcomes the inconvenience of the requirement in Equation (\ref{eq:maindiff:req0}) that depends on both $s_T$ and $s_C$ in an asymmetric way.
We observe that the behavior of the problem depends on $\sigma$ and $\tau$ only through the ratio $\sigma/\tau$; 
for brevity, we set $\tau = 1$.

\begin{corollary}[Special case: $\tau=1$ and $s_T = s_C$]\label{cor:special:tau1:sTeqsC}
    Under the assumptions of Theorem \ref{thm:maindiff}, let $\tau=1$ and $s_T = s_C$.
    Then, the requirement in equation (\ref{eq:maindiff:req0}) becomes
    \begin{equation}
        \sigma >\frac{s_C-1}{\sqrt{s_C}},
    \end{equation}
    and $\rho$ simplifies to
    \begin{equation}\label{eq:special:tau1:sTeqsC:rho}
        \rho =\frac{\sigma ^2 s_C \left(s_C^2-1\right) \left(\left(\sigma ^2+2\right) s_C-2\right)}{\left(s_C \left(\left(\sigma ^2+1\right) s_C-2\right)+1\right){}^2}.
    \end{equation}
\end{corollary}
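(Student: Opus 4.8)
The plan is to treat this purely as a specialization of Theorem~\ref{thm:maindiff}: no new probabilistic content is needed, since the inequality $Pr(\cdots)\le\rho^{d/4}$ together with $0\le\rho<1$ already holds in the full generality of that theorem. The corollary only records what the admissibility requirement (Equation~(\ref{eq:maindiff:req0})) and the bound $\rho$ (Equation~(\ref{eq:maindiff:rho})) become when we substitute $\tau=1$ and impose the symmetry $s_T=s_C$, and then simplify. So the whole proof is a substitution-and-algebra verification.

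First I would dispose of the threshold. Setting $\tau=1$ and $s_T=s_C$ in Equation~(\ref{eq:maindiff:req0}), the radicand collapses as $\frac{s_C^2}{s_C}+s_C=2s_C$, so the bound becomes $\frac{\sqrt{2}\,(s_C-1)}{\sqrt{2s_C}}=\frac{s_C-1}{\sqrt{s_C}}$, which is the stated requirement. Next I would substitute into $\rho$. In the numerator of Equation~(\ref{eq:maindiff:rho}), the factor $s_C^2 s_T(s_T+1)$ becomes $s_C^3(s_C+1)$ and $s_C(\sigma^2+2\tau^2)-2\tau^2$ becomes $(\sigma^2+2)s_C-2$, giving numerator $4\sigma^2(s_C-1)s_C^3(s_C+1)\big((\sigma^2+2)s_C-2\big)$.

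The key step is the denominator. Expanding its inner expression $-s_C(\sigma^2+4\tau^2)s_T+s_C^2\big(\sigma^2+2(\sigma^2+\tau^2)s_T\big)+2\tau^2 s_T$ under $\tau=1$ and $s_T=s_C$, the two $\sigma^2 s_C^2$ contributions cancel (the $-\sigma^2 s_C^2$ coming from the first term against the $+\sigma^2 s_C^2$ from the second), leaving $2s_C^3(\sigma^2+1)-4s_C^2+2s_C=2s_C\big(s_C^2(\sigma^2+1)-2s_C+1\big)$. Squaring this produces a factor $4s_C^2$ in the denominator that exactly cancels the $4s_C^2$ extractable from $s_C^3=s_C\cdot s_C^2$ in the numerator; using $(s_C-1)(s_C+1)=s_C^2-1$ then yields Equation~(\ref{eq:special:tau1:sTeqsC:rho}).

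The main obstacle is entirely bookkeeping in the denominator expansion: one must notice the cancellation of the $\sigma^2 s_C^2$ terms and then factor out $2s_C$ so that the shared $4s_C^2$ becomes visible and cancels against the numerator; everything else is routine. As an independent sanity check I would confirm that the resulting $\rho$ has the correct large-cluster limit predicted by the warm-up: as $s_C\to\infty$, the leading terms give $\rho\to\frac{\sigma^2(\sigma^2+2)}{(\sigma^2+1)^2}$, which matches Theorem~\ref{thm:wmup} at $\tau=1$ and is consistent with the remark that the two results agree when the cluster size grows without bound.
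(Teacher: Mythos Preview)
Your proposal is correct and takes essentially the same approach as the paper: the paper treats this corollary as a direct specialization of Theorem~\ref{thm:maindiff} obtained by substituting $\tau=1$ and $s_T=s_C$ into Equations~(\ref{eq:maindiff:req0}) and~(\ref{eq:maindiff:rho}), without writing out the algebra explicitly. Your derivation fills in precisely those steps, the cancellation of the $\sigma^2 s_C^2$ terms and the factoring out of $2s_C$ in the denominator are handled correctly, and the asymptotic sanity check against Theorem~\ref{thm:wmup} is a nice addition that the paper alludes to only in a remark.
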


In the simplified case of the latter corollary, we observe that the theorem applies to any of the samples assigned to either cluster.
Therefore, we can use the union bound to obtain the following result.
\begin{corollary}[Union Bound for all Data Points]\label{cor:special:tau1:sTeqsC:union}
    Under the assumptions of Theorem \ref{thm:maindiff}, let $\tau=1$ and $s_T = s_C$.
    Let $\varepsilon_U > 0$ be a small constant and let 
    \begin{equation}
        d \geq \frac{4 \log\left( \frac{1}{\varepsilon_U/n} \right)}{\log\left( \frac{1}{\rho} \right)},
    \end{equation} 
    where $\rho$ is given by Equation (\ref{eq:special:tau1:sTeqsC:rho}).
    
    Then, the probability of any of the data points in either cluster being closer to the cluster to which it is not currently assigned is less than $\varepsilon_U$.
\end{corollary}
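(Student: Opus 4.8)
The plan is to upgrade the single-sample reassignment bound of Corollary \ref{cor:special:tau1:sTeqsC} to a statement about all $n$ samples simultaneously, via the union bound, mirroring the argument used for Corollary \ref{cor:maindiff:typical:single}. The first step is to observe that under the symmetric assumption $s_T = s_C$ the single-sample bound applies uniformly to every sample. For a sample $j$ currently assigned to a cluster $C$ (with $T$ denoting the other cluster), Theorem \ref{thm:maindiff} and its specialization Corollary \ref{cor:special:tau1:sTeqsC} give
\[
    Pr\left( \left\| x_j - \mu_T \right\|^2 - \left\| x_j - \mu_C \right\|^2 < 0 \right) \leq \rho^{d/4},
\]
with $\rho$ as in Equation (\ref{eq:special:tau1:sTeqsC:rho}). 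Because $s_T = s_C$, relabeling the two clusters leaves $\rho$ unchanged, so this same bound holds for every sample regardless of which cluster it currently occupies.

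The second step is to apply the union bound (Fact \ref{fact:union}) across the $n$ samples. Writing $A_j$ for the event that sample $j$ is closer to the cluster to which it is not currently assigned, I obtain
\[
    Pr\left( \bigcup_{j=1}^n A_j \right) \leq \sum_{j=1}^n Pr(A_j) \leq n \rho^{d/4}.
\]
It then remains to check that the hypothesis on $d$ makes the right-hand side at most $\varepsilon_U$. Since Remark \ref{rem:chibound:rho} ensures $0 < \rho < 1$ so that $\log(1/\rho) > 0$, substituting $d \geq 4 \log(n/\varepsilon_U)/\log(1/\rho)$ (note $\log(1/(\varepsilon_U/n)) = \log(n/\varepsilon_U)$) gives $(d/4)\log(1/\rho) \geq \log(n/\varepsilon_U)$, hence $\rho^{d/4} \leq \varepsilon_U/n$ and $n \rho^{d/4} \leq \varepsilon_U$, which is the claim.

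There is no real analytic difficulty remaining, as Lemma \ref{lem:chibound} and Theorem \ref{thm:maindiff} already carry the probabilistic content. The only point that genuinely requires care is the justification that the single-sample bound holds with the \emph{same} constant $\rho$ for all $n$ samples: this is exactly what the equal-size assumption $s_T = s_C$ provides, and it is what keeps the union bound clean. It is also worth noting that the events $A_j$ are not independent, since they share the randomness of the cluster centers, but the union bound requires no independence, so this dependence causes no trouble. In the asymmetric case $s_T \neq s_C$ the two clusters would produce different values of $\rho$ and one would have to take the larger of the two, which is precisely the bookkeeping that the present corollary sidesteps.
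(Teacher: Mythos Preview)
Your proposal is correct and follows essentially the same approach as the paper, which simply notes before stating the corollary that the single-sample bound applies to any sample in either cluster (due to the $s_T = s_C$ symmetry) and invokes the union bound. Your write-up is more explicit about the role of the equal-size assumption and the arithmetic on $d$, but the argument is the same.
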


{\bf In other words, as $d$ grows, any arbitrary assignment (subject to the conditions of the theorem, and the equal-size simplifying assumption here) becomes a fixed point of the k-means algorithm with high probability.} Similar expressions can be obtained for clusters of different sizes. In the main text, we proceed to examine ``typical'' partitions.

\section{Additional Methods}\label{sec:additional_experiments}

\subsection{Clustering Based on PCA Sign Splitting}\label{sec:appendix_pca_splitting}

In order to demonstrate that in certain regimes the clustering problem becomes ``trivial,'' we introduce a simplified clustering algorithm based on Principal Component Analysis (PCA); another common modification to k-means using PCA is described in the next section.
In the ``trivial'' clustering algorithm, designed specifically for the simple case of two clusters on which we focus in this paper, we compute the first principal component of the data and partition the points based on the sign of their principal component coefficient.

We demonstrate this simplified algorithm in Section \ref{sec:additional_results:kmeans_in_practice}, where the PCA is implemented using Python's scikit-learn.

\subsection{PCA + k-Means}

In addition to the trivial clustering algorithm mentioned previously, a common approach to improve the performance of k-means clustering is to apply PCA to the data as a preprocessing step before k-means \cite{ding2004k}. Unlike the ``trivial'' clustering approach described in the previous section, this method can be used beyond the two-cluster case which is the focus of this paper. 
We demonstrate this preprocessing step in Section \ref{sec:additional_results:kmeans_in_practice}, where the PCA is implemented using Python's scikit-learn.

\section{Additional Numerical Results}\label{sec:additional_results}

\subsection{Values of $\rho$ for Theorem \ref{thm:maindiff}}\label{additional_results:values_of_rho}

Examples of the values of $\rho$ in Equation (\ref{eq:maindiff:rho}) for different parameters are presented in Figure \ref{fig:rho}. The area in the dashed blue line is the region where the requirements of Theorem \ref{thm:maindiff} are not satisfied. We recall that $\rho$ here represents a conservative bound for the ``worst case'' scenario, and therefore some of the effects discussed in this paper are observed beyond the regions where the theorem applies.

\begin{figure}
    \centering
    \includegraphics[width=0.95\linewidth]{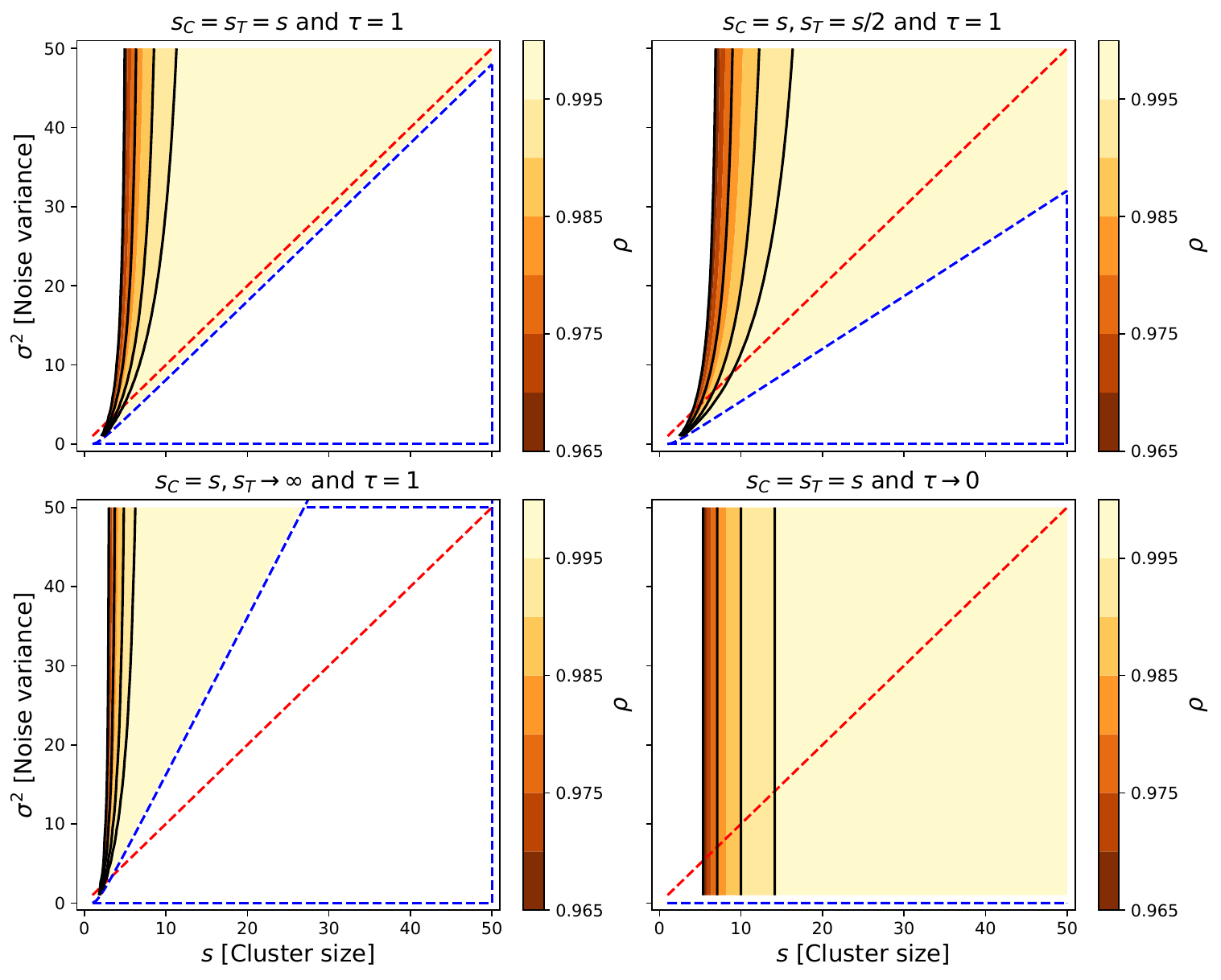}
    \caption{$\rho$ (Equation (\ref{eq:maindiff:rho})) as a function of the noise level $\sigma$ for different cluster sizes $s_C$, $s_T$. The areas bounded by the dashed blue line are regions that do not satisfy the requirements of Theorem \ref{thm:maindiff}. The dashed red line is $\sigma^2 = s_C$.
    }
    \label{fig:rho}
\end{figure}

\subsection{Experiments for Theorem  \ref{thm:wmup} (``Warmup'')}\label{additional_experiments:warmup}

In this section, we present numerical experiments for Theorem \ref{thm:wmup} at different dimensions $d$ and noise levels $\sigma^2$. Each instance is an independent experiment, with data generated according to the probabilistic model defined in Theorem \ref{thm:wmup}. The data consists of two ``clean'' cluster centers $\mu_T\in \Rd$ and $\mu_W \in \Rd$, which are independent and identically distributed
$\mu_T, \mu_W  \sim N(0, I_d)$. 
An independent sample is generated from the $\mu_T$ center: $x = \mu_T + \xi$, where $\xi \sim N(0, \sigma^2 I_d)$. In each experiment, we examine whether the sample $x$ is closer to $\mu_W$ than to $\mu_T$.

We ran $10^5$ experiments for each combination of $d$ and $\sigma^2$.
Figure \ref{fig:warmup_results} illustrates the empirical probability that the sample is closer to the ``wrong center'' $\mu_W$, along with the theoretical upper bounds of Theorem \ref{thm:wmup}.  The error bars were computed using Wilson's interval (see Definition \ref{def:Wilson_interval}).

The experiment is consistent with Theorem \ref{thm:wmup}, and demonstrates how, in this simplified case, as the dimension grows, the probability of incorrect assignment drops to zero in very broad settings. As we observe in the paper, this simplified setup is different from the k-means setup and misses a crucial component, which is how including the sample in the computation of a cluster center biases the cluster center towards that point. This fact explains the difference between the results here and the results in the body of the paper. 

\begin{figure}
    \centering
    \includegraphics[width=1.0\linewidth]{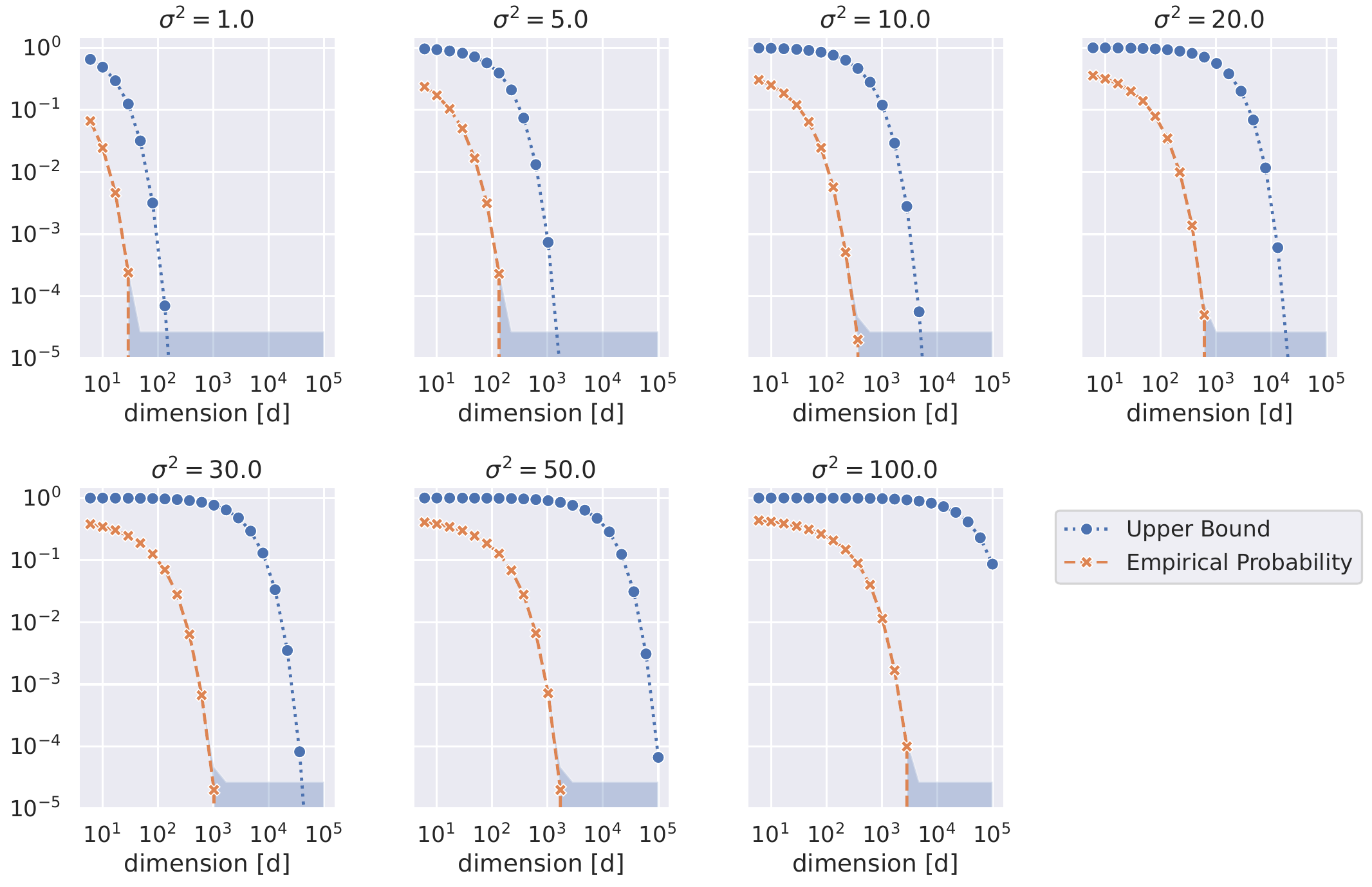}
    \caption{
Numerical results for ``warmup'' Theorem \ref{thm:wmup}. In each experiment, data (centroids and sample) are generated according to the probabilistic model in Theorem \ref{thm:wmup}. We ran a total of $10^5$ experiments for each combination of $d$ and $\sigma^2$, and reported the proportion of times the sample was assigned to the wrong cluster (see Section \ref{additional_experiments:warmup}). Error bars are computed using Wilson's interval (see Definition \ref{def:Wilson_interval}). We observe that, in every case, the proportion of wrong assignments decreases with the dimension. This is consistent with Theorem \ref{thm:wmup}. 
This is different from the behavior of k-means studied in the rest of the paper.
    }
    \label{fig:warmup_results}
\end{figure}

\subsection{Experiments for Corollary \ref{cor:special:tau1:sTeqsC:union}}\label{sec:additional_results:union_bound_cor}

In this section, we discuss the results of numerical experiments based on Corollary \ref{cor:special:tau1:sTeqsC:union}. Each instance is an independent experiment where the data (centroids and samples) are generated as follows: two i.i.d. centroids are randomly generated as $\mu_k^{\text{True}} \in \Rd \sim N(0, I_d)$ for $k=1,2$. Subsequently, 20 i.i.d. samples are generated for each centroid: $x_i = \mu_{z_i^{\text{True}}}^{\text{True}} + \xi_i$, where $\xi_i \sim N(0, \sigma^2Id)$ represents noise, and $z_i^{\text{True}} \in \{1,2\}$ denotes the true subset for each sample. A random partition is created by randomly selecting 20 samples with equal probabilities and assigning them to subset $z_i = 0$. The remaining samples are assigned to subset $z_i = 1$. Using this partition, we perform one step of k-means and examine whether at least one point was reassigned to a different subset.

We generated $10^5$ independent instances of the experiment at each of several combinations of values of $\sigma^2$ and $d$. 
In each case, we compute the proportion of times at least one point swapped clusters, i.e., the empirical probability that the initial partition is not a fixed point of the k-means algorithm. Figure \ref{fig:union_bound_results} shows the estimated proportions, as well as the predicted upper bound. 
Given the parameters used for the experiments, the assumptions in Corollary \ref{cor:special:tau1:sTeqsC:union} hold for $\sigma^2>18.05$, which is similar to the experiments in Figure \ref{fig:main_theorem_results} 
and Figure \ref{fig:union_bound_results}.
The results demonstrate how in these settings and at high dimension almost any partition is a fixed point of the k-means algorithm.

\begin{figure}
    \centering
    \includegraphics[width=1.0\linewidth]{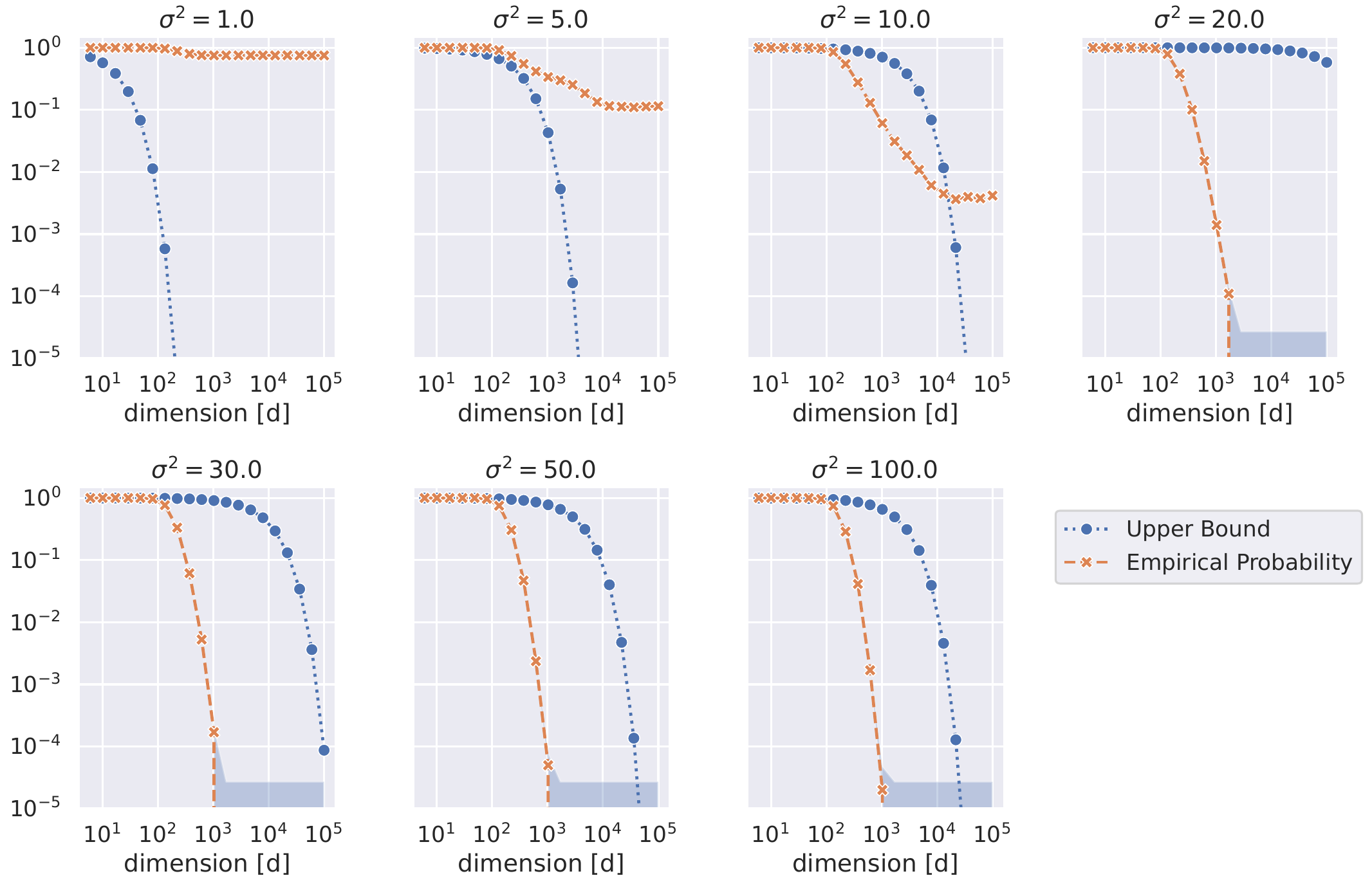}
    \caption{Numerical experiments for Corollary \ref{cor:special:tau1:sTeqsC:union}. For each experiment data (centroids and samples) are generated using the probabilistic model defined in Theorem \ref{thm:maindiff} for the special case of equally populated clusters. In the plots we show the ratio of instances (yellow x) where at least one sample $x_j$ switches clusters after a step of k-means. The error interval is Wilson's interval (See Definition \ref{def:Wilson_interval}).
    In addition, we plot the theoretical upper bound  (Equation (\ref{eq:special:tau1:sTeqsC:rho})) (blue circles). 
    The conditions of Corollary \ref{cor:special:tau1:sTeqsC:union} are satisfied by $\sigma^2 > 18.05$. We observe that as the dimension increases most partitions become a fixed point (no points switch clusters).
    }
    \label{fig:union_bound_results}
\end{figure}

\subsection{Additional Results for k-Means in Practice}\label{sec:additional_results:kmeans_in_practice}

In this section, we revisit the experiments conducted in Section \ref{sec:numerical_results:kmeans_vs_pca} and present the results in terms of the k-means loss (Equation (\ref{eq:kmeans:loss1})). As a benchmark, we use the loss computed for the \textit{ground truth} partition used to generate each dataset.
For each experiment, we record the final loss obtained by each of the clustering methods and each initialization strategy described in Section \ref{sec:numerical_results:kmeans_vs_pca}. 
We note that when dimensionality reduction is used (PCA+ k-means and PCA + split), we use the partition produced by the algorithm to calculate the loss for the {\em original data}, not the reduced-dimension data.

Since the raw loss is difficult to interpret across different parameters, 
we defined a score that examines whether each algorithm produces a partition that is better or worse than the ground truth partition in terms of the loss in Equation (\ref{eq:kmeans:loss1}).
In each instance of the experiment, if the loss calculated for the algorithm's output partition was close to the ground truth loss (up to a relative difference of $10^{-6}$), the score was zero; if the output was better than the ground truth (lower loss), the score was one; if the ground truth was better, the score was negative one. We averaged these scores over all instances generated for each setting to obtain an overall performance measure.

Figure \ref{fig:experiments:loss} illustrates the average scores, comparing (left) k-means, (center) k-means on PCA-reduced data, and (right) clustering by PCA splitting (see Section \ref{sec:additional_experiments}) against the ground truth in terms of the loss calculated for the obtained partition. 
These results are consistent with the NMI scores shown in Figure \ref{fig:experiments:nmi} and demonstrate that at high dimensions, the k-means algorithm fails to minimize Equation (\ref{eq:kmeans:loss1}), and converges to suboptimal fixed points.
As expected, initialization plays an important role, but even relatively good initialization strategies exhibit issues.

\begin{figure}
    \centering
    \includegraphics[width=0.8\linewidth]{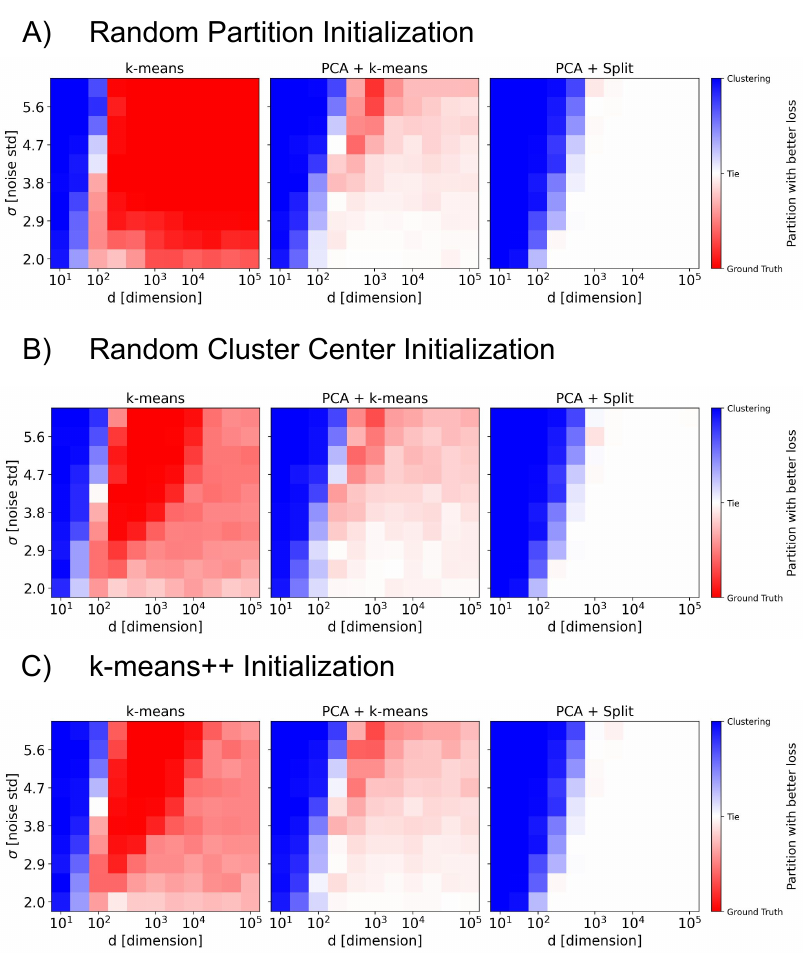}
    \caption{Comparison of the k-means loss obtained with different approaches against the loss obtained with the ground truth partition. The k-means loss is used as a measure of performance, as defined in Equation (\ref{eq:kmeans:loss1}). The figure illustrates the proportion of experiments where each method performed better (red), equal (white), or worse (blue) than the ground truth in terms of the k-means loss. Three different initialization methods are used (A) random partition, (B) random data points assigned as centers, (C) \textit{k-means++}. }
    \label{fig:experiments:loss}
\end{figure}

\end{document}